\declaretheorem{theorem}
\declaretheorem[numberlike=theorem]{proposition}
\declaretheorem[numberlike=theorem]{lemma}
\declaretheorem[numberlike=theorem, style=remark]{remark}
\DeclareMathOperator*{\tr}{tr}
\DeclareMathOperator*{\argmin}{argmin}
\newcommand{\R}{\mathbb{R}}
\newcommand{\E}{\mathbb{E}}
\newcommand{\T}{\mathsf{T}}
\newcommand{\kl}[2]{\mathsf{KL}(#1\:\Vert\:#2)}
\renewcommand{\paragraph}[1]{\textbf{#1}}
\def\paragraph{\@startsection{paragraph}{4}{\parindent}{10pt}{-5pt}{\normalsize\bf}}
\title[Probability Flow Solution of the Fokker-Planck Equation]{Probability Flow Solution of \\the Fokker-Planck Equation}
\author[Nicholas M. Boffi and Eric Vanden Eijnden]{Nicholas M.~Boffi and Eric Vanden Eijnden}
\address{Courant Institute of Mathematical Sciences\\
New York University, New York, NY 10012}
\email{boffi@cims.nyu.edu and eve2@cims.nyu.edu}
\begin{document}

\maketitle
\begin{abstract}The method of choice for integrating the time-dependent Fokker-Planck equation in high-dimension is to generate samples from the solution via integration of the associated stochastic differential equation. Here, we study an alternative scheme based on integrating an \textit{ordinary} differential equation that describes the flow of probability. 
Acting as a transport map, this equation deterministically pushes samples from the initial density onto samples from the solution at any later time. 
Unlike integration of the stochastic dynamics, the method has the advantage of giving direct access to quantities that are challenging to estimate from trajectories alone, such as the probability current, the density itself, and its entropy. 
The probability flow equation depends on the gradient of the logarithm of the solution (its ``score''), and so is \textit{a-priori} unknown. To resolve this dependence, we model the score with a deep neural network that is learned on-the-fly by propagating a set of samples according to the instantaneous probability current.
We show theoretically that the proposed approach controls the KL divergence from the learned solution to the target, while learning on external samples from the stochastic differential equation does not control either direction of the KL divergence.
Empirically, we consider several high-dimensional Fokker-Planck equations from the physics of interacting particle systems. We find that the method accurately matches analytical solutions when they are available as well as moments computed via Monte-Carlo when they are not. Moreover, the method offers compelling predictions for the global entropy production rate that out-perform those obtained from learning on stochastic trajectories, and can effectively capture non-equilibrium steady-state probability currents over long time intervals.\end{abstract}

\section{Introduction}
The time evolution of many dynamical processes occurring in the natural sciences, engineering, economics, and statistics are naturally described in the language of stochastic differential equations (SDE)~\citep{Gardiner,oksendal,evans2012introduction}. Typically, one is interested in the probability density function (PDF) of these processes, which describes the probability that the system will occupy a given state at a given time. The density can be obtained as the solution to a Fokker-Planck equation (FPE), which can generically be written as~\citep{risken1996fokker,bass2011stochastic}
\begin{equation}
\tag{FPE}
    \label{eq:fpe}
    \partial_t \rho^*_t(x) = - \nabla \cdot \left( b_t(x) \rho^*_t(x) - D_t(x) \nabla \rho^*_t(x)\right),  \qquad  x \in \Omega \subseteq \R^d,
\end{equation}
where $\rho^*_t(x) \in \R_{\geq 0}$ denotes the value of the density at time $t$, $b_t(x) \in \R^d $ is a vector field known as the drift, and $D_t(x) \in \R^{d\times d}$ is a positive-semidefinite tensor known as the diffusion matrix. \eqref{eq:fpe} must be solved for $t\ge0$ from some initial condition $\rho_{t=0}^*(x) = \rho_0(x)$, but in all but the simplest cases, the solution is not available analytically and can only be approximated via numerical integration. 

\paragraph{High-dimensionality.} For many systems of interest -- such as interacting particle systems in statistical physics~\citep{chandler1987introduction,spohn2012large}, stochastic control systems~\citep{kushner2001numerical}, and models in mathematical finance~\citep{oksendal} -- the dimensionality $d$ can be very large. This renders standard numerical methods for partial differential equations inapplicable, which become infeasible for $d$ as small as five or six due to an exponential scaling of the computational complexity with $d$. The standard solution to this problem is a Monte-Carlo approach, whereby the SDE associated with~\eqref{eq:fpe} 
\begin{equation}
    \label{eq:sde}
    dx_t = b_t(x_t) dt + \nabla \cdot D_t(x_t) dt + \sqrt{2} \sigma_t(x_t) dW_t,
\end{equation}
is evolved via numerical integration to obtain a large number $n$ of trajectories~\citep{kloeden1992stochastic}. In \eqref{eq:sde}, $\sigma_t(x)$ satisfies $\sigma_t(x)\sigma^\T_t(x) = D_t(x)$ and $W_t$ is a standard Brownian motion on $\R^d$. Assuming that we can draw samples $\{x_0^i\}_{i=1}^n$ from the initial PDF $\rho_0$, simulation of \eqref{eq:sde} enables the estimation of expectations via empirical averages
\begin{equation}
    \label{eq:expect:sde}
    \int_\Omega \phi(x) \rho^*_t(x) dx \approx \frac1n \sum_{i=1}^n \phi(x^i_t),
\end{equation}
where $\phi: \Omega \to \R$ is an observable of interest. While widely used, this method only provides samples from $\rho^*_t$, and hence other quantities of interest like the value of $\rho^*_t$ itself or the time-dependent differential entropy of the system $H_t = -\int_\Omega \log \rho^*_t(x) \rho^*_t (x) dx$ require sophisticated interpolation methods that typically do not scale well to high-dimension.

\paragraph{A transport map approach.} 
\label{sec:intro:transport_map}
Another possibility, building on recent theoretical advances that connect transportation of measures to the Fokker-Planck equation~\citep{jordan_variational_1998}, is to recast~\eqref{eq:fpe} as the transport equation~\citep{villani2009optimal,santambrogio2015optimal}
\begin{equation}
    \label{eq:transport:fpe}
    \partial_t \rho^*_t(x) = - \nabla \cdot \left( v^*_t(x) \rho^*_t(x)\right) 
\end{equation}
where we have defined the velocity field
\begin{equation}
\label{eq:velocity:prob}
    v^*_t(x) = b_t(x) - D_t(x) \nabla \log \rho^*_t(x).
\end{equation}
This formulation reveals that $\rho^*_t$ can be viewed as the pushforward of $\rho_0$ under the flow map $X_{\tau, t}^*(\cdot)$
of the ordinary differential equation
\begin{equation}
    \label{eq:probflow}
    \frac{d}{dt} X^*_{\tau,t}(x) = v^*_t(X^*_{\tau,t}(x)), \qquad X^*_{\tau,\tau}(x) = x, \quad t,\tau \ge 0.
\end{equation}
Equation~\eqref{eq:probflow} is known as the \textit{probability flow equation}, and its solution has the remarkable property that if $x$ is a sample from $\rho_0$, then $X_{0,t}^*(x)$ will be a sample from $\rho^*_t$. Viewing $X^*_{\tau, t}:\Omega \to \Omega$ as a transport map, $\rho^*_t = X^*_{0, t}\sharp\rho_0$ can be evaluated at any position in $\Omega$ via the change of variables formula \citep{villani2009optimal,santambrogio2015optimal} \begin{equation}
    \label{eq:rhot:rho0}
    \rho^*_t(x) = \rho_0(X^*_{t,0}(x)) \exp\left( - \int_0^t \nabla \cdot v^*_\tau(X^*_{t,\tau}(x)) d\tau\right)
\end{equation}
where $X^*_{t,0}(x)$ is obtained by solving \eqref{eq:probflow} backward from some given $x$. Importantly, access to the PDF as provided by~\eqref{eq:rhot:rho0} immediately gives the ability to compute quantities such as the probability current or the entropy; by contrast, this capability is absent when directly simulating the SDE.

\paragraph{Learning the flow.} The simplicity of the probability flow equation~\eqref{eq:probflow} is somewhat deceptive, because the velocity $v_t^*$ depends explicitly on the solution $\rho^*_t$ to the Fokker-Planck equation \eqref{eq:fpe}.
Nevertheless, recent work in generative modeling via score-based diffusion~\citep{ermon_song_1, ermon_song_2, song_how_2021} has shown that it is possible to use deep neural networks to estimate $v^*_t$, or equivalently the so-called \textit{score} $\nabla \log \rho^*_t$ of the solution density. 
Here, we introduce a variant of score-based diffusion modeling in which the score is learned on-the-fly over samples generated by the probability flow equation itself. 
The method is self-contained and enables us to bypass simulation of the SDE entirely; moreover, we provide both empirical and theoretical evidence that the resulting self-consistent training procedure offers improved performance when compared to training via samples produced from simulation of the SDE.

\subsection{Contributions}
Our contributions are both theoretical and computational:
\begin{itemize}[leftmargin=.2in]
    \item We provide a bound on the Kullback-Leibler divergence from the estimate $\rho_t$ produced via an approximate velocity field $v_t$ to the target $\rho_t^*$. This bound motivates our approach, and shows that minimizing the discrepancy between the learned score and the score of the push-forward distribution systematically improves the accuracy of $\rho_t$.
    \item Based on this bound, we introduce two optimization problems that can be used to learn the velocity field~\eqref{eq:velocity:prob} in the transport equation~\eqref{eq:transport:fpe} so that its solution coincides with that of the Fokker Planck equation~\eqref{eq:fpe}. 
    Due to its similarities with score-based diffusion approaches in generative modeling (SBDM), we call the resulting method \textit{score-based transport modeling} (SBTM).
    \item We provide specific estimators for quantities that can be computed via SBTM but are not directly available from samples alone, like point-wise evaluation of $\rho_t$ itself, the differential entropy, and the probability current. 
    \item We test SBTM on several examples involving interacting particles that pairwise repel but are kept close by common attraction to a moving trap. In these systems, the FPE is high-dimensional due to the large number of particles, which vary from 5 to 50 in the examples below. Problems of this type frequently appear in the molecular dynamics of externally-driven soft matter systems~\citep{frenkel2001understanding,spohn2012large}. We show that our method can be used to accurately compute the entropy production rate, a quantity of interest in the active matter community~\citep{nardini2017entropy}, as it quantifies the out-of-equilibrium nature of the system's dynamics.
\end{itemize}

\subsection{Notation and assumptions.}
\label{sec:assumptions}
Throughout, we assume that the stochastic process \eqref{eq:sde} evolves over a domain $\Omega\subseteq \R^d$ in which it remains at all times~$t\ge0$. We assume that the drift vector $b_t: \Omega \to \R^d$ and the diffusion tensor $D_t: \Omega \to \R^{d\times d}$ are  twice-differentiable and bounded in both $x$ and $t$, so that the solution to the SDE~\eqref{eq:sde} is well-defined at all times $t\ge0$. The symmetric tensor $D_t(x) = D_t^\T(x)$ is assumed to be positive semi-definite for each $(t,x)$, with Cholesky decomposition $D_t(x) = \sigma_t(x) \sigma^\T_t(x)$. We further assume that the initial PDF $\rho_0$ is three-times differentiable, positive everywhere on $\Omega$, and such that $H_0 = -\int_\Omega \log \rho_0(x) \rho_0(x) dx < \infty$. This guarantees that $\rho^*_t$ enjoys the same properties at all times $t>0$. Finally, we assume that $\log \rho^*_t$ is $K$-smooth globally for $(t,x)\in [0,\infty)\times \Omega$, i.e. 
\begin{equation}
    \label{eq:Ksmooth}
    \exists K>0 \ : \quad \forall (t,x)\in [0,\infty)\times \Omega \quad |\nabla \log \rho^*_t(x) - \nabla \log \rho^*_t(y)| \le K |x-y|.
\end{equation}
This technical assumption is needed to guarantee global existence and uniqueness of the solution of the probability flow equation. Throughout, we use the shorthand notation $\dot{y}_t = \frac{d}{dt}y_t$ interchangeably for a time-dependent quantity $y_t$.

\section{Related work}
\paragraph{Score matching} Our approach builds directly on the toolbox of score matching originally developed by Hyv{\"a}rinen~\citep{hyvarinen_estimation_2005, hyvarinen_connections_2007, hyvarinen_extensions_2007, hyvarinen_optimal_2008} and more recently extended in the context of diffusion-based generative modeling~\citep{ermon_song_1, ermon_song_2, song_SDE, de_bortoli_diffusion_2021, dockhorn_score-based_2022, mittal_symbolic_2021}.
These approaches assume access to training samples from the target distribution (e.g., in the form of examples of natural images).
Here, we bypass this need and use the probability flow equation to obtain the samples needed to learn an approximation of the score.
\citet{lu2021maximum} recently showed that using the transport equation~\eqref{eq:transport} with a velocity field learned via SBDM can lead to inaccuracies in the likelihood unless higher-order score terms are well-approximated. Proposition~\ref{prop:entropy} shows that the self-consistent approach used in SBTM solves these issues and ensures a systematic approximation of the target $\rho_t^*$.
~\citet{lai_improving_2023} recently used a similar idea to improve sample quality with score-based probability flow equations in generative modeling.

\paragraph{Density estimation and Bayesian inference} Our method shares commonalities with transport map-based approaches~\citep{marzouk_introduction_2016} for density estimation and variational inference~\citep{zhang_advances_2019, blei_variational_2017} such as normalizing flows~\citep{tabak_density_2010, tabak_family_2013, rezende_variational_2016, huang_convex_2021, papamakarios_normalizing_2021, kobyzev_normalizing_2021}. Moreover, because expectations are approximated over a set of samples according to~\eqref{eq:expect:sde}, the method also inherits elements of classical ``particle-based'' approaches for density estimation such as Markov chain Monte Carlo~\citep{mcmc} and sequential Monte Carlo~\citep{dai_invitation_2020, del_moral_sequential_2006}.

Our approach is also reminiscent of a recent line of work in Bayesian inference that aims to combine the strengths of particle methods with those of variational approximations~\citep{dai_provable_2016, saeedi_variational_2017}. In particular, the method we propose bears some similarity with Stein variational gradient descent (SVGD)~\citep{liu_stein_2017, liu_stein_2018, liu_stein_2019} (see also~\citep{lu_scaling_2018, li_stochastic_2020}), in that both methods approximate the target distribution via \textit{deterministic} propagation of a set of samples. The key differences are that (i) our method learns the map used to propagate the samples, while the map in SVGD corresponds to optimization of the kernelized Stein discrepancy, and (ii) the methods have distinct goals, as we are interested in capturing the dynamical evolution of $\rho_t^*$ rather than sampling from an equilibrium density. Indeed, many of the examples we consider do not have an equilibrium density, i.e. $\lim_{t\rightarrow\infty} \rho_t^*$ does not exist.

\paragraph{Approaches for solving the FPE} Most closely connected to our paper are the  works by~\citet{maoutsa_interacting_2020} and~\citet{shen2022self}, who similarly propose to bypass the SDE through use of the probability flow equation, building on earlier work by~\citet{degond1990determinsitic} and~\citet{russo_deterministic_1990}.
The critical differences between~\citet{maoutsa_interacting_2020} and our approach are that they perform estimation over a linear space or a reproducing kernel Hilbert space rather than over the significantly richer class of neural networks, and that they train using the original score matching loss of~\citet{hyvarinen_estimation_2005}, while the use of neural networks requires the introduction of regularized variants.
Because of this,~\cite{maoutsa_interacting_2020} studies systems of dimension less than or equal to five; in contrast, we study systems with dimensionality as high as $100$.

Concurrently to our work,~\citet{shen2022self} proposed a variational problem similar to SBTM. A key difference is that SBTM is not limited to Fokker-Planck equations that can be viewed as a gradient flow in the Wasserstein metric over some energy (i.e., the drift term in the SDE~\eqref{eq:sde} need not be the gradient of a potential), and that it allows for spatially-dependent and rank-deficient diffusion matrices. Moreover, our theoretical results are similar, but by avoiding the use of costly Sobolev norms lead to a practical optimization problem that we show can be solved in high dimension and over long times. In a follow-up to~\citet{shen2022self} and our present work,~\citet{li_self-consistent_2023} propose an algorithm that can be seen as an expectation-maximization algorithm for the loss function in~\eqref{eq:sbtm}, which avoids calculation of $G_t$ according to equation~\eqref{eq:XtGt}.

\paragraph{Neural-network solutions to PDEs} Our approach can also be viewed as an alternative to recent neural network-based methods for the solution of partial differential equations (see e.g.~\cite{e2017deep,Raissi2019physics,han2018solving,sirigano2018dgm,bruna2022neural}). Unlike these existing approaches, our method is tailored to the solution of the Fokker-Planck equation and guarantees that the solution is a valid probability density. Our approach is fundamentally Lagrangian in nature, which has the advantage that it only involves learning quantities locally at the positions of a set of evolving samples; this is naturally conducive to efficient scaling for high-dimensional systems.

\section{Methodology}
\subsection{Score-based transport modeling} 
\label{sec:sbtm}
Let $s_t:\Omega \to\R^d$ denote an approximation to the score of the target $\nabla\log\rho_t^*$, and consider the solution $\rho_t: \Omega \rightarrow \R_{\geq 0}$ to the transport equation
\begin{equation}
    \label{eq:transport}
    \tag{TE}
    \partial_t\rho_t(x) = - \nabla \cdot (v_t(x)\rho_t(x)) \qquad \text{with} \quad v_t(x) = b_t(x) - D_t(x) s_t(x).
\end{equation}
Our goal is to develop a variational principle that may be used to adjust $s_t$ so that~$\rho_t$ tracks~$\rho_t^*$. Our approach is based on the following inequality, whose proof may be found in Appendix~\ref{app:ent:prod}:
\begin{restatable}[Control of the KL divergence]{proposition}{sbtment}
\label{prop:entropy}
Assume that the conditions listed in Sec.~\ref{sec:assumptions} hold. Let $\rho_t$ denote the solution to the transport equation~\eqref{eq:transport}, and let $\rho^*_t$ denote the solution to the Fokker-Planck equation~\eqref{eq:fpe}. Assume that $\rho_{t=0}(x) = \rho^*_{t=0}(x) = \rho_0(x)$ for all $x \in \Omega$. Then 
\begin{equation}
    \label{eq:ent:prod:rate:1}
     \frac{d}{dt} \kl{\rho_t}{\rho^*_t} \le \frac{1}2  \int_\Omega\left|s_t(x) - \nabla \log \rho_t(x)\right|_{D_t(x)}^2 \rho_t(x)dx,
\end{equation}
where $|\cdot|^2_{D_t(x)} = \langle \cdot, D_t(x) \cdot \rangle$.
\end{restatable}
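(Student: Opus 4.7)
The plan is to differentiate the KL divergence in time, use the continuity-equation form of both the transport equation and the Fokker-Planck equation to rewrite the time derivative as a boundary-free integral, and then complete the square using the identity $2a^\T D c = |a+c|_{D}^2 - |a|_{D}^2 - |c|_{D}^2$.

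First I would write
\begin{equation*}
    \frac{d}{dt}\kl{\rho_t}{\rho^*_t} = \int_\Omega (\partial_t \rho_t) \log\!\frac{\rho_t}{\rho^*_t}\, dx + \int_\Omega \partial_t \rho_t\, dx - \int_\Omega \frac{\rho_t}{\rho^*_t}\partial_t \rho^*_t\, dx,
\end{equation*}
where the middle term vanishes by conservation of mass. Substituting $\partial_t \rho_t = -\nabla\cdot(v_t\rho_t)$ and $\partial_t\rho^*_t = -\nabla\cdot(v^*_t\rho^*_t)$ and integrating by parts (justified by the standing regularity assumptions in Section~\ref{sec:assumptions}, which ensure decay/invariance on $\Omega$ so that boundary terms drop), one obtains
\begin{equation*}
    \frac{d}{dt}\kl{\rho_t}{\rho^*_t} = \int_\Omega \rho_t\, (v_t - v^*_t)\cdot \bigl(\nabla\log\rho_t - \nabla\log\rho^*_t\bigr)\, dx,
\end{equation*}
after using $\nabla(\rho_t/\rho^*_t) = (\rho_t/\rho^*_t)(\nabla\log\rho_t - \nabla\log\rho^*_t)$.

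Next I would substitute the explicit forms $v_t = b_t - D_t s_t$ and $v^*_t = b_t - D_t\nabla\log\rho^*_t$, so that $v_t - v^*_t = D_t(\nabla\log\rho^*_t - s_t)$. Setting $a = \nabla\log\rho^*_t - s_t$ and $c = \nabla\log\rho_t - \nabla\log\rho^*_t$, the integrand becomes $\rho_t\, a^\T D_t c$. Using the polarization identity $2\,a^\T D_t c = |a+c|_{D_t}^2 - |a|_{D_t}^2 - |c|_{D_t}^2$ and noting that $a + c = \nabla\log\rho_t - s_t$ yields
\begin{equation*}
    \frac{d}{dt}\kl{\rho_t}{\rho^*_t} = \frac{1}{2}\!\int_\Omega |s_t - \nabla\log\rho_t|_{D_t}^2\rho_t\, dx - \frac{1}{2}\!\int_\Omega|\nabla\log\rho^*_t - s_t|_{D_t}^2\rho_t\, dx - \frac{1}{2}\!\int_\Omega|\nabla\log\rho_t-\nabla\log\rho^*_t|_{D_t}^2\rho_t\, dx.
\end{equation*}
Since $D_t$ is positive semidefinite, the last two terms are non-positive and may be dropped to obtain the claimed inequality~\eqref{eq:ent:prod:rate:1}.

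The main obstacle is technical rather than conceptual: one must justify the integration by parts rigorously. Under the smoothness and positivity assumptions on $\rho_0$, the bounds on $b_t, D_t$, and the global Lipschitz bound~\eqref{eq:Ksmooth} on $\nabla\log\rho^*_t$, both $\rho_t$ and $\rho^*_t$ remain smooth and positive on $\Omega$, and standard arguments show that $\rho_t/\rho^*_t$ and $\log(\rho_t/\rho^*_t)$ together with their relevant gradients decay (or satisfy appropriate invariance) at the boundary of $\Omega$, so the boundary terms indeed vanish. The remainder of the argument is essentially algebraic once the time derivative is placed into the symmetric bilinear form above.
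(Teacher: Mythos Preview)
Your argument is correct and essentially identical to the paper's: both compute $\frac{d}{dt}\kl{\rho_t}{\rho^*_t}=\int_\Omega (s^*_t-s_t)\cdot D_t(\nabla\log\rho_t-s^*_t)\,\rho_t\,dx$ via integration by parts, and then bound this bilinear term by $\tfrac12|s_t-\nabla\log\rho_t|_{D_t}^2$ using the same square-expansion/polarization identity (the paper phrases it as dropping the two nonnegative squares in $|\nabla\log\rho_t-s^*_t+s^*_t-s_t|_{D_t}^2$, which is exactly your polarization step). Your version is in fact slightly more informative, since you record the exact equality with the two discarded nonpositive terms explicitly.
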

In particular,~\eqref{eq:ent:prod:rate:1} implies that for any $T\in[0,\infty)$ we have explicit control on the KL divergence
\begin{equation}
    \label{eq:ent:prod:n}
     \kl{\rho_T}{\rho^*_T} \le \frac{1}2  \int_0^T \int_\Omega\left|s_t(x) - \nabla \log \rho_t(x)\right|_{D_t(x)}^2 \rho_t(x)dx dt.
\end{equation}
Remarkably,~\eqref{eq:ent:prod:n} only depends on the approximate $\rho_t$ and does not include~$\rho_t^*$: it states that the accuracy of $\rho_t$ as an approximation of $\rho_t^*$ can be improved by enforcing agreement between $s_t$ and $\nabla\log\rho_t$. This means that we can optimize~\eqref{eq:ent:prod:n} \textit{without} making use of external data from $\rho_t^*$, which offers a self-consistent objective function to learn the score $s_t$ using~\eqref{eq:transport} alone.

The primary difficulty with this approach is that $\rho_t$ must be considered as a  functional of~$s_t$, since the velocity~$v_t$ used in~\eqref{eq:transport} depends on~$s_t$. To render the resulting minimization of the right-hand side of~\eqref{eq:ent:prod:n} practical, we can exploit that~\eqref{eq:transport} can be solved via the method of characteristics, as summarized in Appendix~\ref{app:basic}. Specifically, if $\dot{X}_t(x) = v_t(X_t(x))$ is the probability flow equation associated with the velocity $v_t$, then $\rho_t = X_t \sharp \rho_0$. This means that the expectation of any function $\phi(x)$ over $\rho_t(x)$ can be expressed as the expectation of $\phi_t(X_t(x))$ over $\rho_0(x)$. Observing that the score of the solution to~\eqref{eq:transport} along trajectories of the probability flow $\nabla\log\rho_t(X_t(x))$ solves a closed equation leads to the following proposition.
\begin{restatable}[Score-based transport modeling]{proposition}{sbtmtwo}
\label{prop:loss2} Assume that the conditions listed in Sec.~\ref{sec:assumptions} hold.  Define $v_t(x) = b_t(x) - D_t(x) s_t(x)$ and consider 
\begin{equation}
\label{eq:XtGt}
\begin{aligned}
&\dot X_{t}(x) = v_t(X_{t}(x)), && X_{0}(x) = x,\\
&\dot G_t(x) = - [\nabla  v_t(X_{t}(x))]^\T G_t(x) - \nabla \nabla \cdot v_t(X_{t}(x)), \quad &&G_0(x) = \nabla \log \rho_0(x).
\end{aligned}
\end{equation}
Then $\rho_t = X_t\sharp \rho_0$ solves~\eqref{eq:transport}, the equality $G_t(x) = \nabla \log \rho_t(X_t(x))$ holds, and for any $T\in[0,\infty)$
\begin{equation}
    \label{eq:ent:prod}
     \kl{X_T\sharp\rho_0}{\rho^*_T} \le \frac{1}2  \int_0^T \int_\Omega\left|s_t(X_t(x)) - G_t(x)\right|_{D_t(X_t(x))}^2 \rho_0(x)dx dt.
\end{equation}
Moreover, if $s^*_t$ is a minimizer of the constrained optimization problem
\begin{equation}
\label{eq:sbtm}\tag{SBTM}
\min_{s} \int_0^T \int_\Omega \left|s_t(X_t(x)) - G_t(x)\right|_{D_t(X_t(x))}^2 \rho_0(x) dx dt \quad \text{subject to }\eqref{eq:XtGt}
\end{equation}
then $D_t(x) s^*_t(x) = D_t(x)\nabla \log \rho_t^*(x)$  where $\rho^*_t$ solves the Fokker-Planck equation~\eqref{eq:fpe}. The map $X^*_t$ associated to any minimizer is a transport map from $\rho_0$ to $\rho^*_t$, i.e.
\begin{equation}
    \label{eq:pdf:prob:flow}
    x \sim \rho_0 \qquad \text{implies that} \qquad X^*_{t}(x) \sim \rho^*_t, \qquad \forall t\in[0,T].
\end{equation}
\end{restatable}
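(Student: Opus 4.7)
The plan is to break the claim into four linked steps: (i) identify $\rho_t = X_t\sharp\rho_0$ as the solution of \eqref{eq:transport}; (ii) identify $G_t$ with the score of this pushforward along characteristics; (iii) use (i)–(ii) together with Proposition~\ref{prop:entropy} and a change of variables to obtain the KL bound~\eqref{eq:ent:prod}; and (iv) exploit this bound to characterize the minimizers of~\eqref{eq:sbtm}.

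For step (i), I would appeal to the method of characteristics for the linear continuity equation (the content of Appendix~\ref{app:basic}): under the smoothness and boundedness assumptions of Sec.~\ref{sec:assumptions}, the ODE $\dot X_t(x) = v_t(X_t(x))$ defines a global diffeomorphism $X_t:\Omega\to\Omega$, and $\rho_t = X_t\sharp\rho_0$ is the unique weak solution of $\partial_t\rho_t + \nabla\cdot(v_t\rho_t)=0$ with initial datum $\rho_0$. For step (ii), I would start from the continuity equation written as $\partial_t\log\rho_t + v_t\cdot\nabla\log\rho_t + \nabla\cdot v_t = 0$, differentiate in $x$ to get
\begin{equation*}
\partial_t\nabla\log\rho_t + (\nabla v_t)^\T \nabla\log\rho_t + (v_t\cdot\nabla)\nabla\log\rho_t + \nabla\nabla\cdot v_t = 0,
\end{equation*}
and then evaluate along $X_t(x)$. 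The convective term $(v_t\cdot\nabla)\nabla\log\rho_t$ combines with $\partial_t\nabla\log\rho_t$ to form the total derivative $\frac{d}{dt}[\nabla\log\rho_t(X_t(x))]$, which therefore satisfies the same ODE as $G_t$ in~\eqref{eq:XtGt}. Since $X_0(x)=x$ and $\rho_{t=0}=\rho_0$, both curves share the initial condition $\nabla\log\rho_0(x)$, and Picard–Lindelöf (justified by the $K$-smoothness in~\eqref{eq:Ksmooth} together with the assumed regularity of $s_t$ and $b_t,D_t$) yields $G_t(x)=\nabla\log\rho_t(X_t(x))$.

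For step (iii), I would substitute the identity from step (ii) into the right-hand side of~\eqref{eq:ent:prod:n} from Proposition~\ref{prop:entropy}, then perform the change of variables $y = X_t(x)$: since $\rho_t = X_t\sharp\rho_0$ is precisely the statement $\rho_t(X_t(x))\,|\det\nabla X_t(x)| = \rho_0(x)$, we get $\rho_t(y)\,dy = \rho_0(x)\,dx$, turning the spatial integral against $\rho_t$ into an integral against $\rho_0$ and replacing $(s_t-\nabla\log\rho_t)(y)$ by $(s_t(X_t(x))-G_t(x))$. This produces exactly~\eqref{eq:ent:prod}. For step (iv), I would note that the choice $s_t = \nabla\log\rho_t^*$ gives $v_t = b_t - D_t\nabla\log\rho_t^*$, for which~\eqref{eq:transport} coincides with~\eqref{eq:fpe} and hence $\rho_t=\rho_t^*$; in that case the integrand of~\eqref{eq:sbtm} vanishes identically, so the infimum is $0$ and any minimizer $s^*_t$ satisfies $|s^*_t(X_t(x)) - G_t(x)|_{D_t(X_t(x))}^2 = 0$ for $(t,x)$ in a set of full $dt\otimes\rho_0(dx)$ measure. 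Equivalently, $D_t(y)(s^*_t(y) - \nabla\log\rho_t(y)) = 0$ for $\rho_t$-almost every $y$, so the drift $b_t - D_t s^*_t$ agrees with $b_t - D_t\nabla\log\rho_t$ and~\eqref{eq:transport} once again reduces to~\eqref{eq:fpe}; uniqueness of the FPE solution yields $\rho_t=\rho_t^*$, whence $D_t(x)s^*_t(x) = D_t(x)\nabla\log\rho_t^*(x)$ and~\eqref{eq:pdf:prob:flow} follows from $\rho^*_t = X^*_t\sharp\rho_0$.

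The main obstacle I anticipate is step (ii): justifying rigorously that $G_t$ coincides with $\nabla\log\rho_t\circ X_t$ requires enough regularity of $s_t$ to differentiate the continuity equation twice in $x$ and to apply a uniqueness theorem for the resulting ODE system. The remaining steps are essentially bookkeeping on top of Proposition~\ref{prop:entropy} and the uniqueness theory for~\eqref{eq:fpe}; the subtle point in step (iv) is that the conclusion is only the $D_t$-projected identity $D_t s^*_t = D_t\nabla\log\rho_t^*$, reflecting that $s^*_t$ is unconstrained along directions in the kernel of $D_t$.
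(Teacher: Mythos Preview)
Your proposal is correct and follows essentially the same route as the paper: steps (i)--(iii) match the paper's proof in Appendix~\ref{app:sbtm:lag} almost verbatim (differentiate the continuity equation, evaluate along characteristics to identify $G_t$ with $\nabla\log\rho_t\circ X_t$, then change variables to reduce to Proposition~\ref{prop:entropy}). For step (iv), the paper routes through an equivalent Eulerian formulation (Proposition~\ref{prop:loss1}) and writes down Euler--Lagrange equations, but the substantive argument it uses for global minimizers---the objective is nonnegative, attains zero at $s_t=\nabla\log\rho_t^*$, hence any minimizer forces $D_t(s^*_t-\nabla\log\rho_t)=0$ $\rho_t$-a.e.\ and the transport equation collapses to~\eqref{eq:fpe}---is exactly yours.
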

Proposition~\ref{prop:loss2} is proven in Appendix~\ref{app:sbtm:lag}. The result also holds with a standard Euclidean norm replacing the diffusion-weighted norm, in which case the minimizer is unique and is given by $s^*_t(x) = \nabla \log \rho_t^*(x)$. In the special case when the SDE is an Ornstein-Uhlenbeck process, the score and the equations for both $X_t$ and $G_t$ can be written explicitly; they are studied in Appendix~\ref{app:gauss}.

In practice, the objective in~\eqref{eq:sbtm} can be estimated empirically by generating samples from $\rho_0$ and solving the equations for $X_t(x)$ and $G_t(x)$ with $x\sim\rho_0$. The constrained minimization problem~\eqref{eq:sbtm} can then in principle be solved with gradient-based techniques via the adjoint method. The corresponding equations are written in Appendix~\ref{app:sbtm:lag}, but they involve fourth-order spatial derivatives that are computationally expensive to compute via automatic differentiation. Moreover, each gradient step requires solving a system of ordinary differential equations whose dimensionality is equal to the number of samples used to compute expectations times the dimension of~\eqref{eq:fpe}. Instead, we now develop a sequential timestepping procedure that avoids these difficulties entirely, and as a byproduct can scale to arbitrarily long time windows.

\subsection{Sequential score-based transport modeling}
\label{sec:local}
An alternative to the constrained minimization in Proposition~\ref{prop:loss2} is to consider an approach whereby the score $s_t$ is obtained independently at each time to ensure that $\kl{\rho_t}{\rho_t^*}$ remains small.
This suggests choosing $s_t$ to minimize $\frac{d}{dt}\kl{\rho_t}{\rho_t^*}$, which admits a simple closed-form bound, as shown in Proposition~\ref{prop:entropy}.
While this explicit form can be used directly, an application of Stein's identity recovers an implicit objective analogous to Hyv\"arinen score-matching that is equivalent to minimizing $\frac{d}{dt}\kl{\rho_t}{\rho_t^*}$ but obviates the calculation of $G_t$. Expanding the square in~\eqref{eq:ent:prod:rate:1} and applying $\int_\Omega s_t(x)^\T \nabla \log \rho_t(x) \,\rho_t(x)dx = - \int_\Omega \nabla\cdot s_t(x) \,\rho_t(x)dx$, we may write
\begin{equation*}
    \label{eq:ent:prod:rate:2}
    \begin{aligned}
     \frac{d}{dt}\kl{\rho_t}{\rho^*_t} &\le \frac{1}{2}  \int_\Omega\big(|s_t(X_t(x))|_{D_t(X_t(x))}^2 + 2\nabla \cdot (D_t(X_t(x))s_t(X_t(x)))\big)\rho_0(x)dx\\
     &\qquad + \frac{1}{2}\int |G_t(x)|^2 \rho_0(x)dx.
      \end{aligned}
\end{equation*}
Because $\nabla\log\rho_t(X_t(x)) = G_t(x)$ is independent of $s_t$, we may neglect the corresponding square term during optimization.
This leads to a simple and comparatively less expensive way to build the pushforward $X^*_t$ such that $X^*_t\sharp \rho_0 = \rho_t^*$ sequentially in time, as stated in the following proposition.
\begin{restatable}[Sequential SBTM]{proposition}{sbtmthree}
\label{prop:loss3}
In the same setting as Proposition~\ref{prop:loss2}, let $X_t(x)$  solve the first equation in~\eqref{eq:XtGt} with $v_t(x) = b_t(x) - D_t(x)s_t(x)$. Let $s_t$ be obtained via
\begin{equation}
    \label{eq:sbtm3}
    \tag{SSBTM}
    \min_{s_t} \int_\Omega \left(|s_t(X_t(x))|_{D_t(X_t(x))}^2 + 2 \nabla \cdot ( D_t(X_t(x))s_t(X_t(x))) \right) \rho_0(x) dx.
\end{equation}
Then, each minimizer $s_t^*$ of \eqref{eq:sbtm3} satisfies $D_t(x) s^*_t(x) = D_t(x) \nabla \log \rho^*_t(x)$ where $\rho^*_t$ is the solution to~\eqref{eq:fpe}. Moreover, the map $X^*_t$ associated to $s_t^*$ is a transport map from $\rho_0$ to $\rho^*_t$. 
\end{restatable}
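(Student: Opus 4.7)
The plan is to reduce the objective in \eqref{eq:sbtm3} to a familiar Hyv\"arinen-style score matching loss, identify its minimizer in terms of $\nabla\log\rho_t$ where $\rho_t = X_t\sharp\rho_0$, and then observe that this choice of $s_t$ makes the transport equation \eqref{eq:transport} coincide with the Fokker-Planck equation \eqref{eq:fpe}, so that $\rho_t=\rho_t^*$ by uniqueness.

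First I would change variables under the pushforward $\rho_t = X_t\sharp \rho_0$ to rewrite the objective \eqref{eq:sbtm3} in Eulerian form as
\begin{equation*}
    J[s_t] \;=\; \int_\Omega \Big(\,|s_t(x)|^2_{D_t(x)} + 2\,\nabla\cdot\big(D_t(x)s_t(x)\big)\Big)\,\rho_t(x)\,dx.
\end{equation*}
Next I would apply Stein's identity (i.e., integration by parts, using the assumed decay/boundary behavior on $\Omega$ and the smoothness hypotheses of Sec.~\ref{sec:assumptions}) to the cross term,
\begin{equation*}
   \int_\Omega \nabla\cdot(D_t(x)s_t(x))\,\rho_t(x)\,dx \;=\; -\int_\Omega \langle D_t(x)s_t(x),\,\nabla\log\rho_t(x)\rangle\,\rho_t(x)\,dx,
\end{equation*}
and complete the square to obtain
\begin{equation*}
    J[s_t] \;=\; \int_\Omega \big|s_t(x)-\nabla\log\rho_t(x)\big|^2_{D_t(x)}\,\rho_t(x)\,dx \;-\; \int_\Omega \big|\nabla\log\rho_t(x)\big|^2_{D_t(x)}\,\rho_t(x)\,dx.
\end{equation*}

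The second term is independent of $s_t$ in the instantaneous (sequential) sense, because $\rho_t$ is fixed by the trajectory $X_t$ generated from the history of velocities up to time $t$. Minimizing $J$ over $s_t$ therefore forces the first term to vanish on the support of $\rho_t$, giving $D_t(x)\big(s_t^*(x)-\nabla\log\rho_t(x)\big)=0$, i.e., $D_t(x)s_t^*(x) = D_t(x)\nabla\log\rho_t(x)$. Note that in the rank-deficient case the minimizer is only determined modulo $\ker D_t(x)$, which is why the statement is phrased on $D_t s_t^*$ rather than $s_t^*$ itself.

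Finally I would substitute this minimizing $s_t^*$ back into the velocity used to evolve $X_t$: $v_t(x) = b_t(x)-D_t(x)s_t^*(x) = b_t(x) - D_t(x)\nabla\log\rho_t(x)$, which is exactly the probability-flow velocity in \eqref{eq:velocity:prob} for $\rho_t$. Consequently $\rho_t = X_t\sharp\rho_0$ satisfies
\begin{equation*}
    \partial_t\rho_t = -\nabla\cdot\big((b_t - D_t\nabla\log\rho_t)\rho_t\big) = -\nabla\cdot(b_t\rho_t - D_t\nabla\rho_t),
\end{equation*}
so $\rho_t$ solves the Fokker-Planck equation \eqref{eq:fpe} with initial datum $\rho_0=\rho_{t=0}^*$. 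Uniqueness of solutions to \eqref{eq:fpe} under the assumptions of Sec.~\ref{sec:assumptions} yields $\rho_t=\rho_t^*$ for all $t\in[0,T]$, hence $D_t(x)s_t^*(x) = D_t(x)\nabla\log\rho_t^*(x)$ and $X_t^*\sharp\rho_0 = \rho_t^*$, as claimed.

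The main conceptual obstacle is justifying the instantaneous treatment of the minimization: although $\rho_t$ is an implicit functional of $s_t$ through the coupled evolution of $X_t$, the sequential framing lets us regard $\rho_t$ as determined by the past and $s_t$ as a free field at the current instant. The argument is then self-consistent via a closure/uniqueness argument on the coupled ODE--minimization system, and the technical work lies in validating the Stein integration by parts (handled by the regularity and boundary conditions of Sec.~\ref{sec:assumptions}).
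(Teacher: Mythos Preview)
Your proposal is correct and follows essentially the same approach as the paper: rewrite the objective in Eulerian form via the pushforward $\rho_t = X_t\sharp\rho_0$, apply Stein's identity / integration by parts to identify the minimizer as $D_t s_t^* = D_t\nabla\log\rho_t$, and substitute back to reduce the transport equation to \eqref{eq:fpe} so that $\rho_t=\rho_t^*$. Your write-up is in fact more explicit than the paper's (which asserts the minimizer identification without spelling out the completion of the square) and you correctly flag the sequential/instantaneous interpretation that makes the $\rho_t$-dependent constant term legitimately independent of $s_t$.
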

Proposition~\ref{prop:loss3} is proven in Appendix~\ref{app:sbtm:seq}. Critically,~\eqref{eq:sbtm3} is no longer a constrained optimization problem. Given the current value of $X_t$ at any time $t$, we can obtain $s_t$ via direct minimization of the objective in~\eqref{eq:sbtm3}. Given $s_t$, we may compute the right-hand side of~\eqref{eq:XtGt} and propagate $X_t$ (and possibly $G_t$) forward in time. The resulting procedure, which alternates between self-consistent score estimation and sample propagation, is presented in Algorithm~\ref{alg:sbtm} for the choice of a forward-Euler integration routine in time. The output of the method produces a feasible solution for~\eqref{eq:sbtm} with an \textit{a-posteriori} bound on the loss obtained via integration. A few remarks on Algorithm~\ref{alg:sbtm} are now in order.

\begin{algorithm}[t]
    \caption{Sequential score-based transport modeling.}
    \label{alg:sbtm}
    \begin{algorithmic}[1]
        \State \textbf{Input}: An initial time $t_0 \in \R_{\geq 0}$. A set of $n$ samples $\{x_i\}_{i=1}^n$ from $\rho_{t_0}$. A set of $N_T$ timesteps $\{\Delta t_k\}_{k=0}^{N_T-1}$.
        \State Initialize sample locations $X^i_{t_0} = x_i$ for $i = 1, \hdots, n$.
        \For{$k=0, \hdots, N_t-1$}
            \State Optimize: $s_{t_k} = \argmin_{s} \frac{1}{n} \sum_{i=1}^n \left[|s(X^i_{t_k})|_{D_{t_k}(X^i_{t_k})}^2 + 2 \nabla\cdot\left(D_{t_k}(X^i_{t_k}) s(X_{t_k}^i)\right)\right]$. 
            \State Propagate samples: \[\quad X^i_{t_{k+1}} = X^i_{t_{k}} + \Delta t_{k} \left(b_{t_{k}}(X^i_{t_k}) - D_{t_{k}}(X_{t_k}^i)s_{t_{k}}(X_{t_k}^i)\right).\] 
            \State Set $t_{k+1} = t_{k} + \Delta t_k$.
        \EndFor
        \State \textbf{Output}: A set of $n$ samples $\{X_{t_k}^i\}_{i=1}^n$ from $\rho_{t_k}$ and the score $\{s_{t_k}(X^i_{t_k})\}_{i=1}^n$ for all $\{t_k\}_{k=0}^{N_T}$.
    \end{algorithmic}
\end{algorithm}

\paragraph{Higher-order integrators}
Algorithm~\ref{alg:sbtm} is stated for choice of forward-Euler integration for simplicity. In practice, any off-the-shelf integrator can be used, such as an adaptive Runge-Kutta method, by temporal discretization of the dynamics
\begin{equation*}
    \begin{aligned}
        \dot{X}_t(x) &= v_t(X_t(x)) \\
        s_t &= \argmin_{s} \int_\Omega \left(|s(X_t(x))|_{D_t(X_t(x))}^2 + 2 \nabla \cdot ( D_t(X_t(x))s(X_t(x))) \right) \rho_0(x) dx.
    \end{aligned}
\end{equation*}
and spatial discretization of the expectation over a set of samples propagated according to the equation for $X_t(x)$. In practice, the minimization can be performed over a parametric class of functions such as neural networks via a few steps of gradient descent.

%In practice, the minimization in Proposition~\ref{prop:loss3} is carried out by gradient-based optimization over a parametric class of functions such as a class of neural networks. With $\theta_t$ denoting the parameters of the approximate score $s_t^{\theta_t}$ at time $t$, this procedure can be captured at the continuous level by the coupled probability-flow/gradient-flow dynamics
%\begin{equation}
%    \label{eqn:continuous-limit}
%    \begin{aligned}
%        \dot{X}_t(x) &= b_t(X_t(x)) - D_t(X_t(x))s_t^{\theta_t}(X_t(x)),\\
%        \dot{\theta}_t &= -\gamma_t \nabla_{\theta_t}\mathcal{L}_t(\theta_t),\\
%        \mathcal{L}_t(\theta_t) &= \int_\Omega \left(|s_t(X_t(x))|_{D_t(X_t(x))}^2 + 2 \nabla \cdot (D_t(X_t(x))s_t(X_t(x))) \right) \rho_0(x) dx,
%    \end{aligned}
%\end{equation}
%where $\gamma_t > 0$ denotes a time-dependent learning rate. Equation~\eqref{eqn:continuous-limit} can be discretized in time with any off-the-shelf integrator (e.g., a Runge-Kutta method). $\gamma_t$ can be split into multiple gradient steps for each step of $X_t(x)$ and the expectation in $\mathcal{L}_t(\theta_t)$ can be discretized as an empirical mean over a finite set of samples.

\paragraph{Divergence computation} To avoid computation of the divergence -- which can be costly for neural networks with high input dimension -- we can use the denoising score matching loss function introduced by~\cite{vincent_connection_2011}, which we discuss in Appendix~\ref{app:denoise}. Empirically, we find that use of either the denoising objective or explicit derivative regularization is necessary for stable training to avoid overfitting to the training data; the level of regularization (or the noise scale in the denoising objective) can be decreased as the size of the dataset increases. 

\paragraph{Time-dependence} When optimizing over a parametric class of functions, the score can be taken to be explicitly time-dependent, or the time-dependence can originate only through the parameters. In either case, all required outputs can be computed on-the-fly to avoid saving the entire history of parameters, which could be memory-intensive for large neural networks. If a time-dependent architecture is used, the method is amenable to online learning by randomly re-drawing initial conditions and optimizing over the resulting trajectory. In the numerical experiments below, we consider time-independent models with time-dependent parameters, because we found them to be sufficient.

\paragraph{SBTM vs. Sequential SBTM} Given the simplicity of the optimization problem~\eqref{eq:sbtm3}, one may wonder if~\eqref{eq:sbtm} is useful in practice, or if it is simply a stepping stone to arrive at~\eqref{eq:sbtm3}.
The primary difference is that~\eqref{eq:sbtm} offers global control on the discrepancy between $s_t$ and $\nabla\log\rho_t$ over $t\in[0,T]$ that unavoidably arises in practice due to learning and time-discretization errors. By contrast, because~\eqref{eq:sbtm3} proceeds sequentially, these errors could accumulate over time in a way that is harder to control.
In the numerical examples below, we took the timestep $\Delta t$ sufficiently small, and the number of samples $n$ sufficiently large, that we did not observe any accumulation of error. 
Nevertheless,~\eqref{eq:sbtm} may allow for more accurate approximation, because the loss is exactly minimized at zero and high-order derivatives of $s_t$ are controlled through calculation of $\dot{G}_t$.

\paragraph{Why not train on external data?} An alternative to the sequential procedure outlined here would be to generate samples from the target $\rho_t^*$ via simulation of the associated SDE, and to approximate the score $\nabla\log\rho_t^*$ via minimization of the loss $\int_0^T \int_{\Omega} (|s_t(x)|^2 + 2\nabla \cdot s_t(x)) \rho_t^*(x)dxdt$, similar to SBDM. As shown in Appendix~\ref{app:sde_learn} neither $\kl{\rho_t}{\rho_t^*}$ nor $\kl{\rho_t^*}{\rho_t}$ are controlled when using this procedure, where $\rho_t = X_t\sharp\rho_0$ is the density of the probability flow equation. Empirically, we find in the numerical experiments that this approach is significantly less stable than sequential SBTM. In particular, and importantly for the applications we consider, we could not stably estimate the trajectory of the entropy production rate using a score model learned from the SDE with the same number of samples as used for SBTM.

\section{Numerical experiments}
In the following, we study two high-dimensional examples from the physics of interacting particle systems, where the spatial variable of the Fokker-Planck equation~\eqref{eq:fpe} can be written as $x = \left(x^{(1)}, x^{(2)}, \hdots, x^{(N)}\right)^\T$ with each $x^{(i)} \in \R^{\bar{d}}$. Here, $\bar{d}$ describes a lower-dimensional ambient space, e.g. $\bar{d} = 2$,  so that the dimensionality of the Fokker-Planck equation $d = N \bar{d}$ will be high if the number of particles $N$ is even moderate\footnote{We would like to emphasize at this stage the difference between the number of \textit{physical particles} $N$, which is a parameter for the system under study and sets the dimensionality of the resulting FPE, and the number of \textit{algorithmic samples} $n$, which is a hyper-parameter that can be chosen at will to improve the accuracy of the learning.}. The still figures shown in this section do not fully depict the complexity of the interacting particle dynamics, and we encourage the reader to view the movies available \href{https://drive.google.com/drive/folders/1JS9H7f9G0JnMe8_Fin22bf6bl3Drvid9?usp=share_link}{here}. With a timestep $\Delta t = 10^{-3}$, a horizon $T=10$, and a fixed $n N \bar{d} = 10^5$, we find that the sequential SBTM procedure takes around two hours for each simulation on a single NVIDIA RTX8000 GPU. In addition, we conclude with a low-dimensional example from the physics of active matter, which highlights the ability of sequential SBTM to remain stable over long times and to capture non-equilibrium probability currents.

\subsection{Harmonically interacting particles in a harmonic trap}
\label{sec:harmonic}
\begin{figure}[t]
\begin{tabular}{c}
    \begin{overpic}[width=\textwidth]{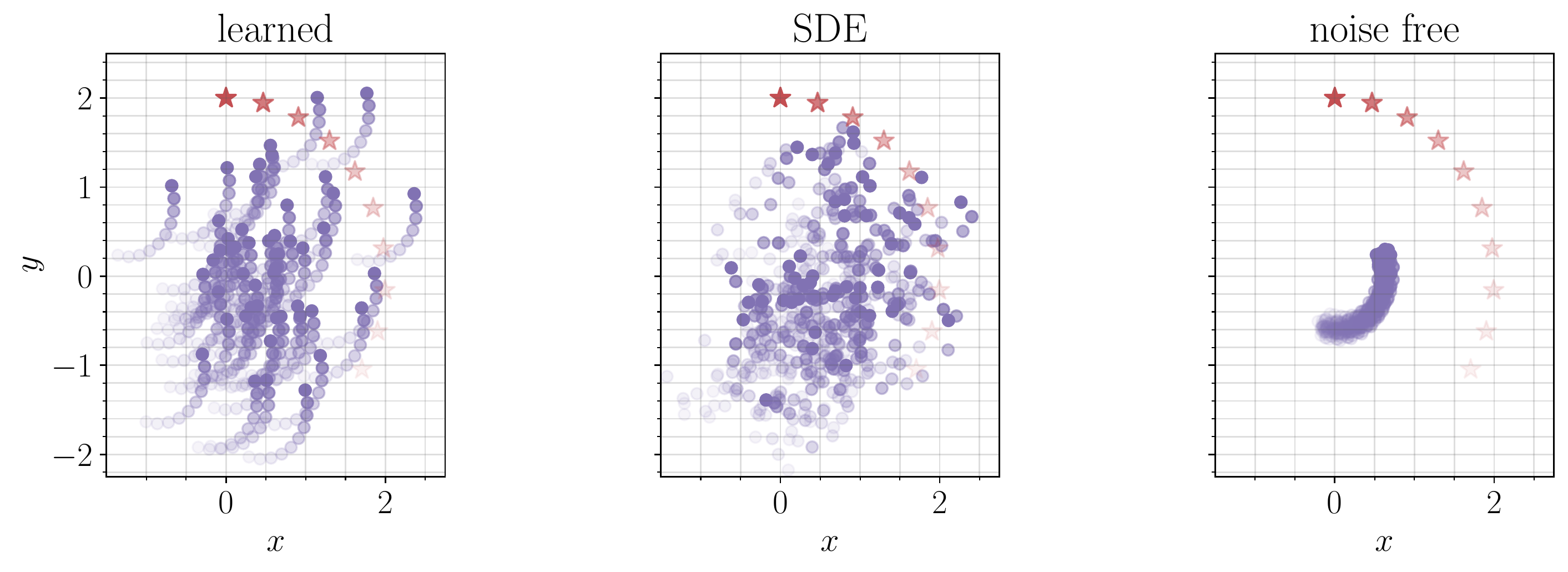}
    \put(0, 35){\textbf{A}}
    \end{overpic} \\
    \begin{overpic}[width=\textwidth]{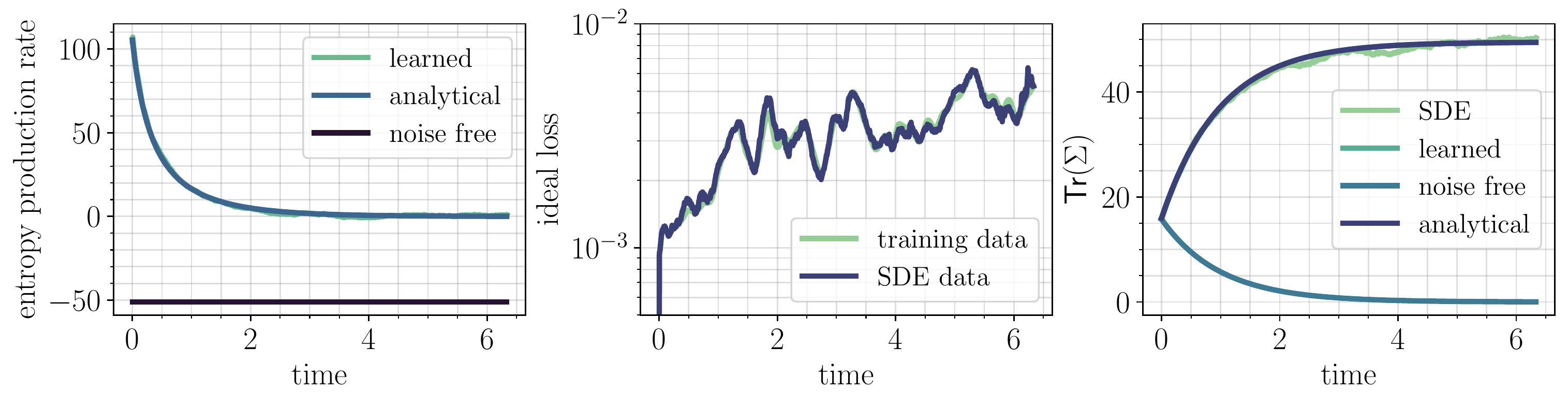}
    \put(0, 27){\textbf{B}}
    \end{overpic}
\end{tabular}
    \caption{\textit{A system of $N=50$ particles in a harmonic trap with a harmonic interaction:} (A) A single sample trajectory. The mean of the trap $\beta_t$ is shown with a red star, while past positions of the particles are indicated by a fading trajectory. The noise-free system (right) is too concentrated, and fails to capture the variance of the stochastic dynamics (center). The learned system (left) accurately captures the variance, and in addition generates physically interpretable trajectories for the particles. (B) Quantitative comparison to the analytical solution. The learned solution matches the entropy production rate, score, and covariance well. A movie of the particle motion can be found \href{https://drive.google.com/file/d/17AErES6VEacEcXv50sebpt7Tc7OsY8YW/view?usp=share_link}{here}.}
    \label{fig:harmonic_particles}
\end{figure}

\paragraph{Setup.} 
Here we study a problem that admits a tractable analytical solution for direct comparison. We consider $N$ two-dimensional particles ($\bar{d} = 2$) that repel according to a harmonic interaction but experience harmonic attraction towards a moving trap $\beta_t \in \R^2$. The motion of the physical particles is governed by the stochastic dynamics
\begin{equation}
    \label{eqn:harmonic_SDE}
    dX^{(i)}_t = (\beta_t - X^{(i)}_t)dt  + \alpha \Big(X^{(i)}_t - \frac{1}{N}\sum_{j=1}^N X^{(j)}_t\Big)dt + \sqrt{2D}\,dW_t^{(i)},\quad i=1, \hdots, N
\end{equation}
where $\alpha \in (0,1)$ is a fixed coefficient that sets the magnitude of the repulsion. The dynamics~\eqref{eqn:harmonic_SDE} is an Ornstein-Uhlenbeck process in the extended variable $x \in \R^{\bar{d}N}$ with block components $x^{(i)}$. Assuming a Gaussian initial condition, the solution to the  Fokker-Planck equation associated with~\eqref{eqn:harmonic_SDE} is a  Gaussian for all time and hence can be characterized entirely by its mean $m_t$ and covariance $C_t$. These can be obtained analytically (Appendices~\ref{app:gauss} and~\ref{app:exp}), which facilitates a quantitative comparison to the learned model. The differential entropy $S_t$ is given by
\begin{equation}
    \label{eq:St:harm}
    H_t  = \tfrac12 \bar d N\left(\log\left(2\pi\right) + 1\right) + \tfrac{1}{2}\log\det C_t.
\end{equation}

In the experiments, we take $\beta_t = a(\cos\pi\omega t, \sin\pi\omega t)^\T$ with $a = 2$, $\omega = 1$, $D = 0.25$, $\alpha = 0.5$, and $N=50$, giving rise to a $100$-dimensional Fokker-Planck equation. The particles are initialized from an isotropic Gaussian with mean $\beta_0$ (the initial trap position) and variance $\sigma_0^2 = 0.25$.

\paragraph{Network architecture.} We take $s_t(x) = -\nabla U_{\theta_t}(x)$, where the potential $U_{\theta_t}(\cdot)$ is given as a sum of one- and two-particle terms
\begin{equation}
    \label{eqn:parametric_potential}
    U_{\theta_t}\big(x^{(1)}, \hdots, x^{(N)}\big) = \sum_{i=1}^N U_{\theta_t, 1}\big(x^{(i)}\big) + \frac{1}{N}\sum_{\substack{i,j=1\\ i \neq j}}^N U_{\theta_t, 2}\big(x^{(i)}, x^{(j)}\big),
\end{equation}
which ensures permutation symmetry amongst the physical particles by direct summation over all pairs. Modeling at the level of the potential introduces an additional gradient into the loss function, but makes it simple to enforce permutation symmetry; moreover, by writing the potential as a sum of one- and two-particle terms, the dimensionality of the function estimation problem is reduced. As motivation for this choice of architecture, we show in Appendix~\ref{app:exp:harmonic} that the class of scores representable by~\eqref{eqn:parametric_potential} contains the analytical score for the harmonic problem considered in this section. To obtain the parameters $\theta_{t_k+\Delta t_k}$, we perform a warm start and initialize from $\theta_{t_k}$, which reduces the number of optimization steps that need to be performed at each iteration. All networks are taken to be multi-layer perceptrons with the $\texttt{swish}$ activation function~\citep{swish_act}; further details on the architectures used can be found in Appendix~\ref{app:exp}.

\paragraph{Quantitative comparison.} For a quantitative comparison between the learned model and the exact solution, we study the empirical covariance $\Sigma$ over the samples and the entropy production rate $\frac{dS_t}{dt}$. Because an analytical solution is available for this system, we may also compute the target $\nabla \log \rho_t(x) = -C_t^{-1}(x - m_t)$ and measure the goodness of fit via the relative Fisher divergence 
\begin{equation}
    \label{eq:discrep}
    \frac{\int_\Omega |s_t(x) - \nabla\log\rho_t(x)|^2 \bar{\rho}(x) dx}{\int_\Omega|\nabla\log\rho_t(x)|^2 \bar{\rho}(x)dx}.
\end{equation} 
In Equation~\eqref{eq:discrep}, $\bar{\rho}$ can be taken to be equal to the current empirical estimate of $\rho_t$ (the training data), or estimated using samples from the stochastic differential equation (the SDE data).%

\paragraph{Results.}
The representation of the dynamics~\eqref{eqn:harmonic_SDE} in terms of the flow of probability leads to an intuitive deterministic motion that accurately captures the statistics of the underlying stochastic process. Snapshots of particle trajectories from the learned probability flow \eqref{eq:probflow}, the SDE \eqref{eqn:harmonic_SDE}, and the noise-free equation obtained by setting $D = 0$ in \eqref{eqn:harmonic_SDE}
are shown in Figure~\ref{fig:harmonic_particles}A. 

Results for this quantitative comparison are shown in Figure~\ref{fig:harmonic_particles}B. The learned model accurately predicts the entropy production rate of the system and minimizes the relative metric~\eqref{eq:discrep} to the order of $10^{-2}$. The noise-free system incorrectly predicts a constant and negative entropy production rate, while the SDE cannot make a prediction for the entropy production rate without an additional learning component; we study this possibility in the next example. In addition, the learned model accurately predicts the high-dimensional covariance $\Sigma$ of the system (curves lie directly on top of the analytical result, trace shown for simplicity). The SDE also captures the covariance, but exhibits more fluctuations in the estimate; the noise-free system incorrectly estimates all covariance components as decaying to zero.

\subsection{Soft spheres in an anharmonic trap}
\label{sec:anharmonic_N5}
\begin{figure}[t!]
\begin{tabular}{cc}
    \multicolumn{2}{l}{%
    \begin{overpic}[width=\textwidth]{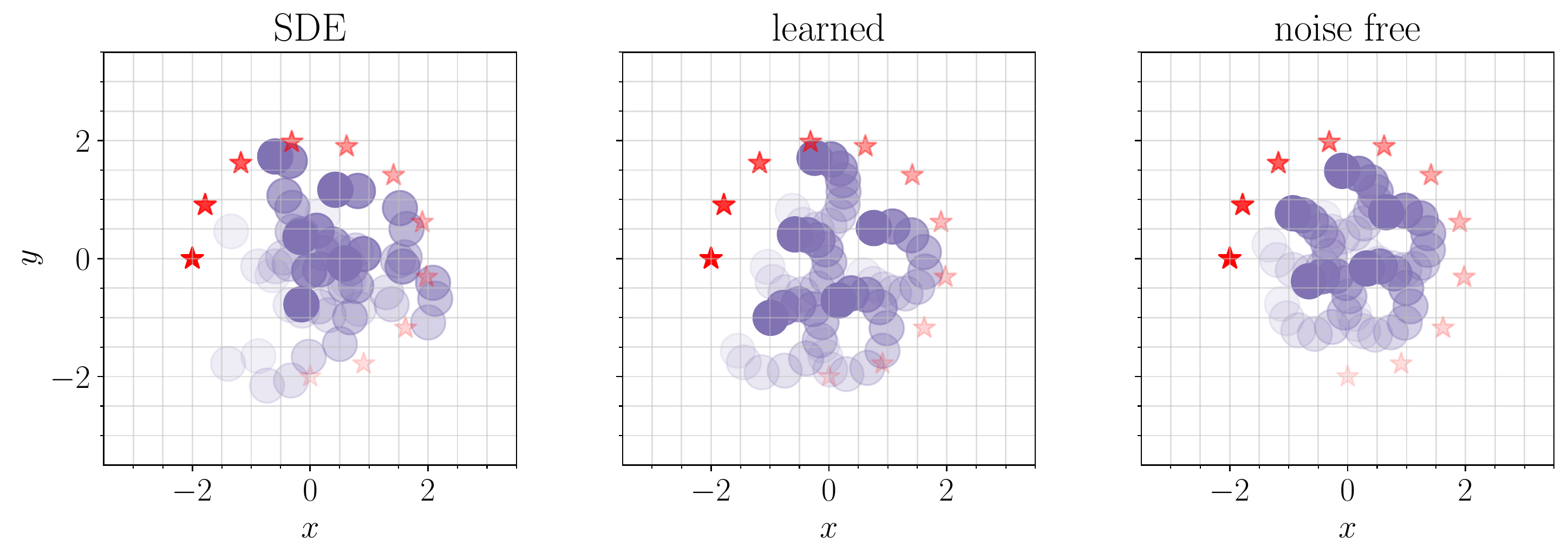}
    \put(0, 35){\textbf{A}}
    \end{overpic}
    }\\
    \begin{overpic}[width=.5\textwidth]{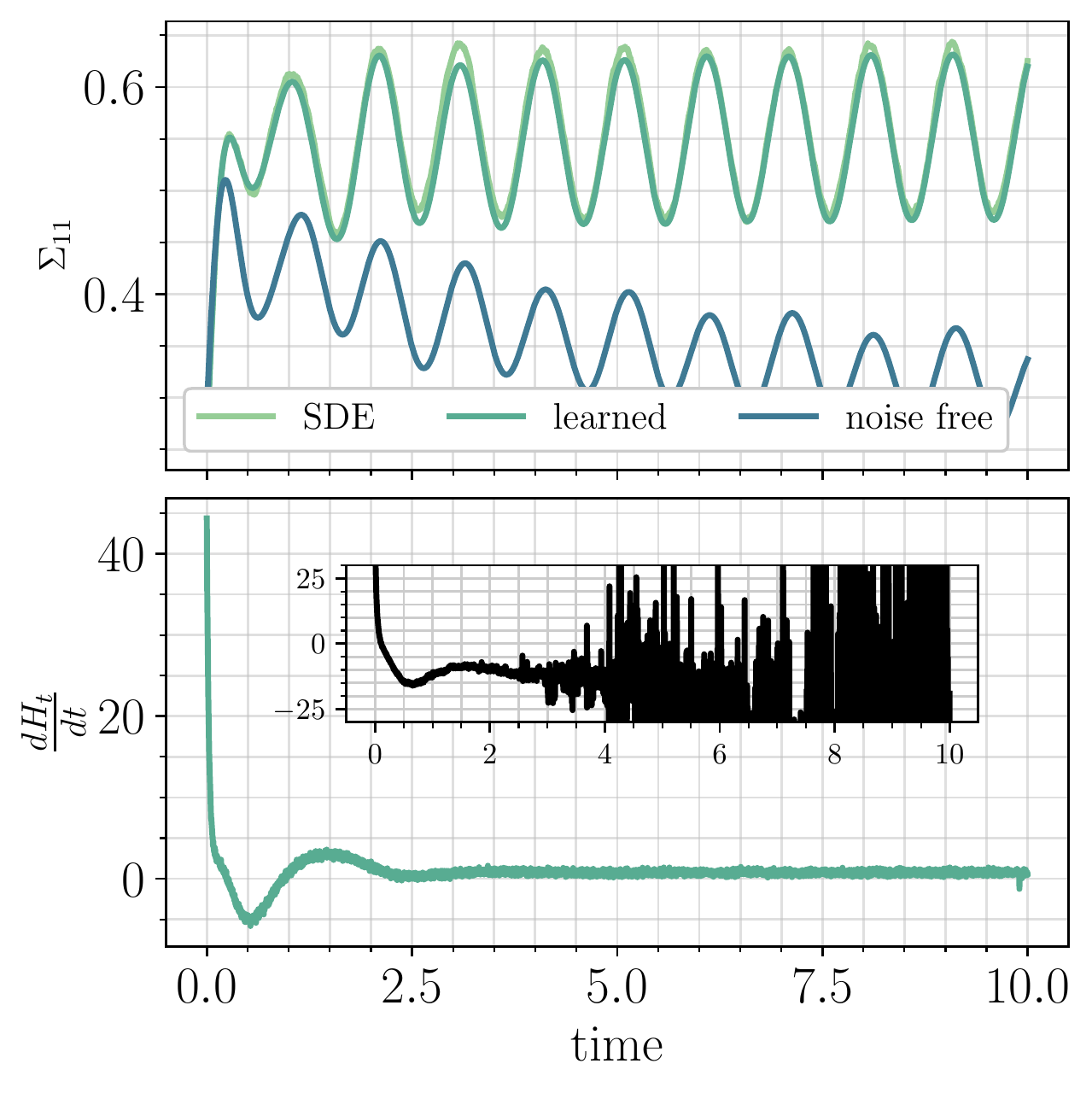}
    \put(0, 98){\textbf{B}}
    \put(0, 50){\textbf{D}}
    \end{overpic}&
    \begin{overpic}[width=.5\textwidth]{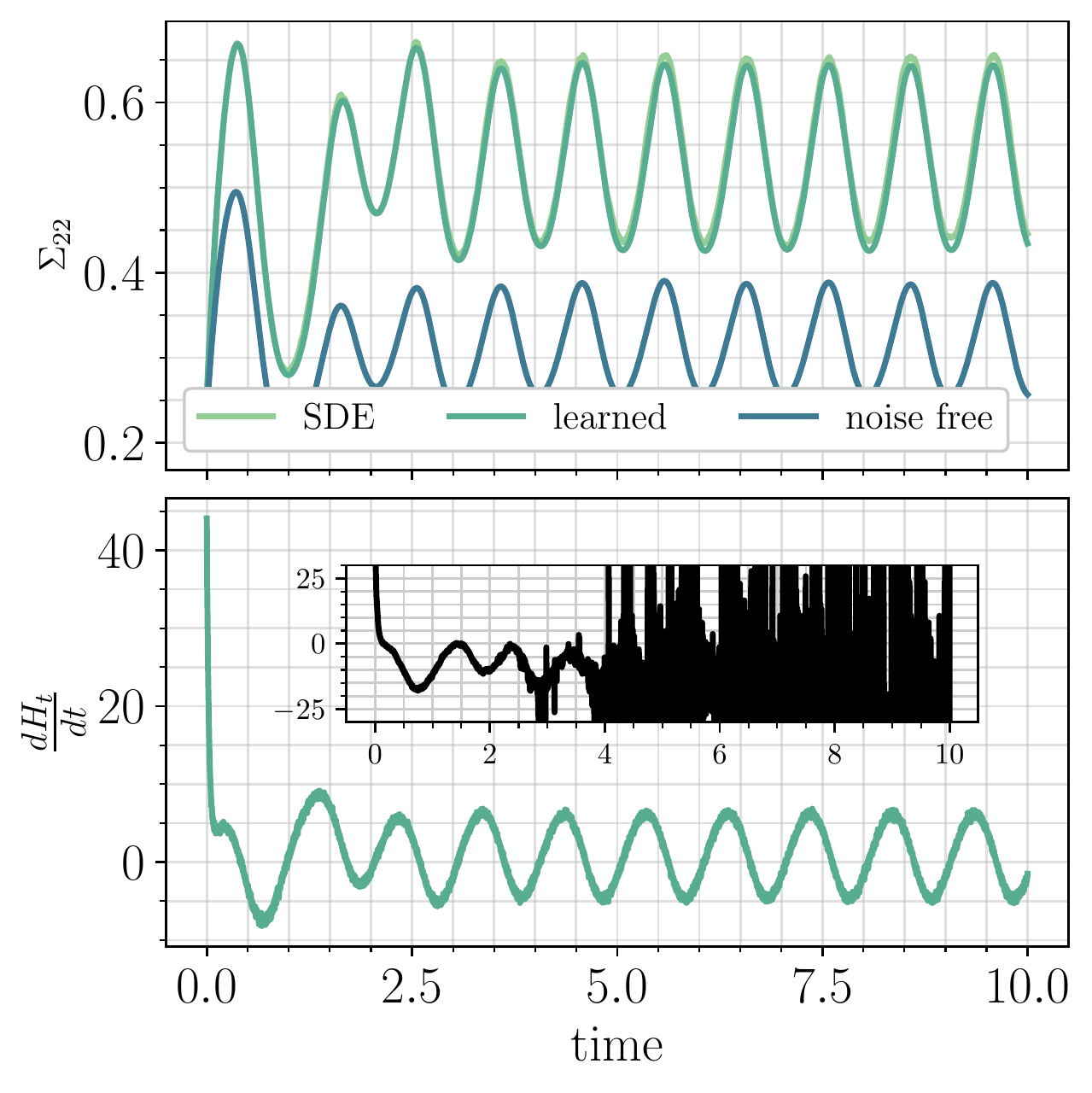}
    \put(3, 98){\textbf{C}}
    \put(3, 50){\textbf{E}}
    \end{overpic}
\end{tabular}
    \caption{\textit{A system of $N=5$ soft-spheres in an anharmonic trap:} (A) Example particle trajectories in the case of a rotating trap. Trap position shown with a red star. Movies of the circular and linear motion can be viewed \href{https://drive.google.com/drive/folders/1K0PFPecUvUez8gIisNaNyeOaFvMBZ_TE?usp=share_link}{here} and \href{https://drive.google.com/drive/folders/1oWs_QYlgNpi58cGvwyQa2i1a8tMamset?usp=share_link}{here}, respectively.
    (B/C) A single component of the covariance of the samples, in the case of a rotating trap in B and a linearly oscillating trap in C. The learned system agrees well with the SDE, while the noise-free system under-predicts the moments.
    (D/E) Prediction of the entropy production rate for a rotating trap in D and linearly oscillating trap in E. Main figure depicts the prediction obtained from SBTM, while the inset depicts the prediction obtained when learning on samples from the SDE. SBTM captures the temporal evolution of the entropy production rate, while learning on the SDE is initially offset and later divergent. 
    }
    \label{fig:anharmonic_particles}
\end{figure}

\paragraph{Setup.} Here, we consider a system of $N=5$ physical particles in an \textit{anharmonic} trap in dimension $\bar{d}=2$ that exhibit soft-sphere repulsion. This system gives rise to a $10$-dimensional~\eqref{eq:fpe}, which is significantly too high for standard PDE solvers. The 
stochastic dynamics is given by
\begin{equation}
\label{eqn:anharmonic_SDE}
\begin{aligned}
    dX^{(i)}_t &= 4B \big(\beta_t - X_t^{(i)} \big)|X_t^{(i)} - \beta_t|^2dt \nonumber\\
    &+ \frac{A}{N r^2}\sum_{j=1}^N \big(X^{(i)}_t - X^{(j)}_t\big)\exp\left(-\frac{|X^{(i)}_t - X^{(j)}_t|^2}{2r^2}\right)dt + \sqrt{2D}\,dW_t,\:\:\: i=1, \hdots, N, 
\end{aligned}
\end{equation}
where $\beta_t$ again represents a moving trap, $A>0$ sets the strength of the repulsion between the spheres, $r$ sets their size, and $B>0$ sets the strength of the trap. We set $\beta(t) = a(\cos\pi \omega t, \sin \pi \omega t)^\T$ or $\beta(t) = a(\cos\pi \omega t, 0)^\T$ with $a = 2, \omega = 1, D = 0.25, A=10$, and $r = 0.5$. We fix $B = D/R^2$ with $R = \sqrt{\gamma N}r$ and $\gamma = 5.0$. This ensures that the trap scales with the number of particles and that they have sufficient room in the trap to generate a complex dynamics. The circular case converges to a distribution $\rho_t^* = \rho^* \circ Q_t$ that can be described as a fixed distribution $\rho^*$ composed with a time-dependent rotation $Q_t$, and hence the entropy production rate converges to zero by change of variables. The linear case does not exhibit this kind of convergence, and the entropy production rate should oscillate around zero as the particles are repeatedly pushed and pulled by the trap. We make use of the same network architecture as in Sec.~\ref{sec:harmonic}.

\paragraph{Results.}
Similar to Section~\ref{sec:harmonic}, an example trajectory from the learned system, the SDE \eqref{eqn:anharmonic_SDE}, and the noise-free system obtained by setting $D = 0$ are shown in Figure~\ref{fig:anharmonic_particles}A in the circular case. The learned particle trajectories exhibit an intuitive circular motion when compared to the SDE trajectory. When compared to the noise-free system, the learned trajectories exhibit a greater amount of spread, which enables the deterministic dynamics to accurately capture the statistics of the stochastic dynamics. Numerical estimates of a single component of the covariance and of the entropy production rate are shown in Figure~\ref{fig:anharmonic_particles}B/C, with all moments shown in Appendix~\ref{app:exp:anharmonic}. The learned and SDE systems accurately capture the covariance, while the noise-free system underestimates the covariance in both the linear and the circular case. The prediction of the entropy production rate via Algorithm~\ref{alg:sbtm} is reasonable in both cases, exhibiting the expected convergence to and oscillation around zero in the circular and linear cases, respectively. In the inset, we show the prediction of the entropy production rate when learning on samples from the SDE; the prediction is initially offset, and later becomes divergent. We found that this behavior was generic when training on the SDE, but never observed it when training on self-consistent samples.

\subsection{An active swimmer}
\begin{figure}[!t]
    \centering
    \includegraphics[width=.5\textwidth]{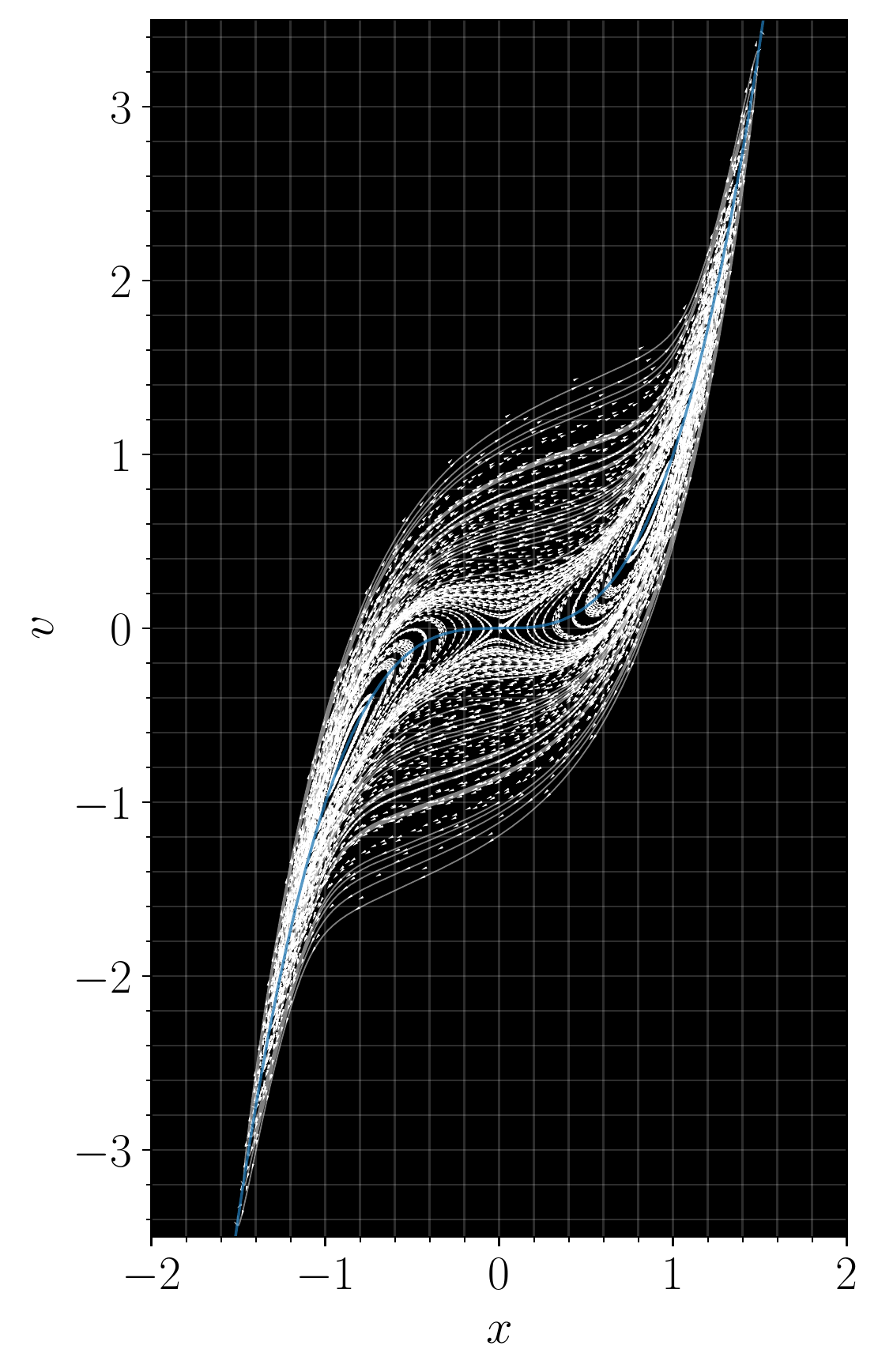}
    \caption{\textit{An active swimmer: probability flow phase portrait.} Phase portrait of the probability flow, computed with parameters frozen at the fixed time $t = 10/\gamma$. Low-opacity curves depict closed limit cycles, while arrows indicate the direction of the probability flow. The phase portrait reveals non-equilibrium steady-state currents, both within and between the two modes. The nullcline $v = x^3$ passes through the two modes (shown in blue), with an unstable equilibrium at the origin.}
    \label{fig:swimmer_pp}
\end{figure}

\begin{figure}[!t]
    \centering
    \includegraphics[width=\textwidth]{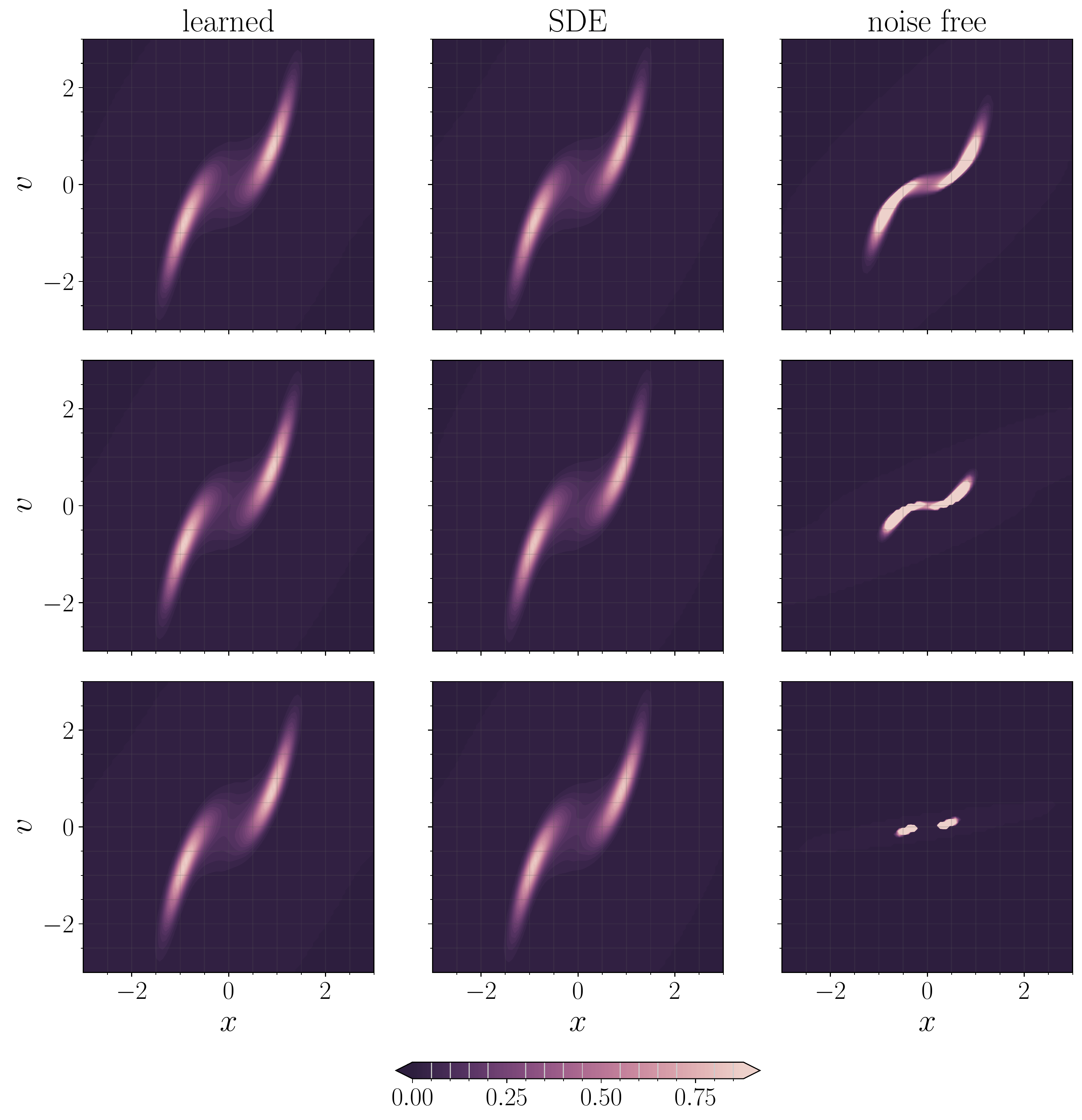}
    \caption{\textit{An active swimmer: kernel density estimates.} PDFs computed via kernel density estimation in the $xv$ plane. Columns denote solution type and rows denote snapshots in time ($t = 0.5/\gamma, 1.5/\gamma$, and $3.0/\gamma$, respectively). The KDE reveals bimodality in the probability density brought about by the activity of the particle. The noise free system becomes too concentrated around the nullcline $v = x^3$, and does not accurately capture the shape of the SDE and learned solutions, while the SDE and learned solutions are nearly identical.}
    \label{fig:swimmer_kde}
\end{figure}

\paragraph{Setup} We now consider a model from the physics of active matter, which describes the motion of a single motile swimmer in an anharmonic trap. The swimmer can be thought of as a run-and-tumble bacterium~\cite{tailleur_statistical_2008}; it travels in a fixed direction for a fluctuating duration before picking a new direction at random in which to swim. The system is two-dimensional, and is given by the stochastic differential equation for the position $x$ and velocity $v$
\begin{equation}
    \label{eqn:swimmer}
    \begin{aligned}
        dx &= \left(-x^3 + v\right)dt,\\
        dv &= -\gamma v dt + \sqrt{2\gamma D}dW_t.
    \end{aligned}
\end{equation}
While low-dimensional,~\eqref{eqn:swimmer} exhibits convergence to a non-equilibrium statistical steady state in which the probability current $j_t(x) = v_t(x)\rho_t(x)$ is non-zero. Here, we show that sequential SBTM is capable of accurately capturing such currents, which is necessary to resolve the dynamics of the Fokker-Planck equation: if our goal were solely to sample at equilibrium, it would be sufficient to freeze the samples after an initial transient. Moreover, we show that the method preserves the stationary distribution over long times relative to the persistence time $1/\gamma$ of the swimmer, and does not display appreciable accumulation of error.

We set $\gamma = 0.1$ and $D = 1.0$. Because noise only enters the system through the velocity variable $v$ in \eqref{eqn:swimmer}, the score can be taken to be one-dimensional, which is equivalent to learning the score only in the range of the rank-deficient diffusion matrix.  Further details on the architecture can be found in Appendix~\ref{app:swimmer}.

\paragraph{Results}
A phase portrait for the learned probability flow dynamics is shown in Figure~\ref{fig:swimmer_pp}, computed by rolling out an additional set of $50$ trajectories for time $5/\gamma$ with a fixed set of parameters (after learning for time $10/\gamma$). The phase portrait depicts closed limit cycles between and centered within the modes reminiscent of the classical phase portrait for the pendulum. Here, the closed limit cycles correspond to non-equilibrium currents that preserve the steady-state density.

A kernel density estimate for the distribution of samples produced by the learned system, the stochastic system, and the noise-free systems are shown in Figure~\ref{fig:swimmer_kde}, which demonstrate that the distribution of the learned samples qualitatively matches the distribution of the SDE samples. Comparatively, the noise-free system grows overly concentrated with time, ultimately converging to a singular dirac measure at the origin. A movie of the motion of the samples $(x_i(t), v_i(t))_{t\geq 0}$ over a duration $10/\gamma$ in phase space can be seen \href{https://drive.google.com/file/d/1ysbVpmMN1SHrEzOJmkoPzwFaKqeXEjtw/view?usp=share_link}{at this link}. The movie highlights convergence of the learned solution to one with a non-zero steady-state probability current that qualitatively matches that of the SDE, but which enjoys more interpretable sample trajectories.

\section{Outlook and conclusions}
Building on the toolbox of score-based diffusion recently developed for generative modeling, we introduced a related approach -- score-based transport modeling (SBTM) -- that gives an alternative to simulating the corresponding SDE to solve the Fokker-Planck equation. While SBTM is more costly than integration of the SDE because it involves a learning component, it gives access to quantities that are not directly accessible from the samples given by integrating the SDE, such as pointwise evaluation of the PDF, the probability current, or the entropy. Our numerical examples indicate that SBTM is scalable to systems in high dimension where standard numerical techniques for partial differential equations are inapplicable. The method can be viewed as a deterministic Lagrangian integration method for the Fokker-Planck equation, and our results show that its trajectories are more easily interpretable than the corresponding trajectories of the SDE. 

\bibliography{refs}
\bibliographystyle{plainnat}
\clearpage

\appendix
\counterwithin{figure}{section}
\counterwithin{equation}{section}
\counterwithin{theorem}{section}
\counterwithin{proposition}{section}
\counterwithin{lemma}{section}
\counterwithin{remark}{section}

\section{Some basic formulas}
\label{app:basic}
Here, we derive some results linking the solution of the transport equation~\eqref{eq:transport}  with that of the probability flow equation~\eqref{eq:probflow}.

\subsection{Probability density and probability current}
We begin with a lemma.
\begin{lemma}
\label{th:pdf}
Let $\rho_t: \Omega \rightarrow \R_{\geq 0}$ satisfy the transport equation
\begin{equation}
    \label{eq:transp:fpe:a}
    \partial_t \rho_t(x) = - \nabla \cdot \left(v_t(x) \rho_t(x)\right).
\end{equation}
Assume that $v_t(x)$ is $C^2$ in both $t$ and $x$  for $t\ge0$ and globally Lipschitz in $x$. Then, given any $t,t' \ge 0$, the solution of \eqref{eq:transp:fpe:a} satisfies
\begin{equation}
    \label{eq:pdf:local:a}
    \rho_t(x) =  \rho_{t'}(X_{t,t'}(x)) \exp\left( -\int_{t'}^t \nabla \cdot v_\tau(X_{t,\tau}(x)) d\tau\right)
\end{equation}
where $X_{\tau,t}$ is the probability flow solution to~\eqref{eq:probflow}.
In addition, given any test function $\phi: \Omega \to \R$, we have
\begin{equation}
    \label{eq:expedct:a}
    \int_\Omega \phi(x) \rho_t(x) dx = \int_\Omega \phi(X_{t',t}(x)) \rho_{t'}(x) dx.
\end{equation}
\end{lemma}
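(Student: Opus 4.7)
The plan is a standard method-of-characteristics argument. The first identity is the statement that $\rho_t$ is transported along the probability flow up to an exponential factor measuring the local expansion/contraction of volume, and the second identity is the pushforward statement $\rho_t = X_{t',t}\sharp\rho_{t'}$ expressed weakly against test functions $\phi$; each follows from the other once Liouville's formula for the Jacobian of the flow is in hand. The Lipschitz hypothesis on $v_t$ guarantees that $X_{\tau,t}$ is well-defined as a $C^1$ diffeomorphism for all $t,\tau\ge 0$ and that the characteristic ODE enjoys the semigroup property $X_{t',\tau}\circ X_{t,t'} = X_{t,\tau}$.

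For the first identity, I would fix $t'$ and $x$ and define the scalar function $u(t) = \rho_t(X_{t',t}(x))$. Differentiating in $t$ using the chain rule and substituting the transport equation in the form $\partial_t\rho_t = -(\nabla\cdot v_t)\rho_t - v_t\cdot\nabla\rho_t$ gives
\begin{equation*}
\dot u(t) = \partial_t\rho_t(X_{t',t}(x)) + v_t(X_{t',t}(x))\cdot\nabla\rho_t(X_{t',t}(x)) = -(\nabla\cdot v_t)(X_{t',t}(x))\,u(t),
\end{equation*}
a scalar linear ODE whose solution is $u(t) = \rho_{t'}(x)\exp\bigl(-\int_{t'}^t (\nabla\cdot v_\tau)(X_{t',\tau}(x))\,d\tau\bigr)$. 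Replacing $x$ by $X_{t,t'}(x)$ and using the semigroup identity $X_{t',\tau}\circ X_{t,t'} = X_{t,\tau}$ inside the exponential yields \eqref{eq:pdf:local:a}.

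For the second identity, I would invoke Liouville's formula: the Jacobian $J_{t',t}(x) := \det \nabla_x X_{t',t}(x)$ satisfies $\dot J_{t',t} = (\nabla\cdot v_t)(X_{t',t}(x))\,J_{t',t}$ with $J_{t',t'}\equiv 1$, hence $J_{t',t}(x) = \exp\bigl(\int_{t'}^t(\nabla\cdot v_\tau)(X_{t',\tau}(x))\,d\tau\bigr)$. Performing the change of variables $y = X_{t',t}(x)$, so that $x = X_{t,t'}(y)$ and $dx = J_{t',t}(X_{t,t'}(y))^{-1}\,dy$, in the right-hand side of \eqref{eq:expedct:a}, and using the semigroup property one more time inside the exponential, turns the integrand into $\phi(y)\,\rho_{t'}(X_{t,t'}(y))\exp\bigl(-\int_{t'}^t(\nabla\cdot v_\tau)(X_{t,\tau}(y))\,d\tau\bigr)$, which is exactly $\phi(y)\rho_t(y)$ by the first identity just established.

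The only real obstacle is bookkeeping with the four possible orderings of time indices on the flow map; everything else is a direct consequence of the Lipschitz assumption on $v$, which simultaneously delivers global existence, the diffeomorphism property, the semigroup identity, and the Liouville evolution for $J_{t',t}$. No integration by parts or decay assumption on $\phi$ or $\rho_t$ is needed, since the argument is purely Lagrangian; integrability of the change-of-variables integral follows from $\rho_{t'}$ being a probability density and the boundedness of $\phi$ on the (forward-invariant) domain $\Omega$.
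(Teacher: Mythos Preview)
Your proof is correct and follows essentially the same route as the paper: both compute $\frac{d}{dt}\rho_t(X_{t',t}(x))$ along characteristics to obtain the scalar linear ODE, integrate it, compose with $X_{t,t'}$ using the semigroup property, and then derive the pushforward identity by a change of variables whose Jacobian is exactly the exponential factor. The only cosmetic difference is that you spell out Liouville's formula for $\det\nabla_x X_{t',t}$ explicitly, whereas the paper simply asserts that the exponential is the Jacobian of the change of variable.
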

In words, Lemma~\ref{th:pdf} states that an evaluation of the PDF $\rho_t$ at a given point $x$ may be obtained by evolving the probability flow equation \eqref{eq:probflow} backwards to some earlier time $t'$ to find the point $x'$ that evolves to $x$ at time $t$, assuming that $\rho_{t'}(x')$ is available. In particular, for $t'=0$, we obtain
\begin{equation}
    \label{eq:pdf:local:aa}
    \rho_t(x) =  \rho_0(X_{t,0}(x)) \exp\left( -\int_{0}^t \nabla \cdot v_\tau(X_{t,\tau}(x)) d\tau\right),
\end{equation}
and
\begin{equation}
    \label{eq:expedct:aa}
    \int_\Omega \phi(x) \rho_t(x) dx = \int_\Omega \phi(X_{0,t}(x)) \rho_{0}(x) dx.
\end{equation}
Since the probability current is by definition $v_t(x)\rho_t(x)$, using~\eqref{eq:pdf:local:aa} to express~$\rho_t(x)$ also gives the follwing equation for the current:
\begin{equation}
    \label{eq:current:local}
    v_t(x) \rho_t(x) = v_t(x) \rho_0(X_{t,0}(x)) \exp\left( -\int_0^t \nabla \cdot v_\tau(X_{\tau,t}(x)) d\tau\right).
\end{equation}

\begin{proof} The assumed $C^2$ and globally Lipschitz conditions on $v_t$ guarantee global existence (on $t\ge0$) and uniqueness of the solution to~\eqref{eq:probflow}. Differentiating $\rho_t(X_{t',t}(x))$ with respect to $t$ and using \eqref{eq:probflow} and \eqref{eq:transp:fpe:a} we deduce
\begin{equation}
    \label{eq:diff:a}
    \begin{aligned}
    \frac{d}{dt} \rho_t(X_{t',t}(x)) &= \partial_t \rho_t(X_{t',t}(x)) + \frac{d}{dt}X_{t',t}(x) \cdot \nabla \rho_t(X_{t',t}(x)) \\
    &= \partial_t \rho_t(X_{t',t}(x)) + v_t(X_{t',t}(x)) \cdot \nabla \rho_t(X_{t',t}(x))\\
    & = - \nabla \cdot v_t(X_{t',t}(x)) \, \rho_t(X_{t',t}(x))
    \end{aligned}
\end{equation}
Integrating this equation in $t$ from $t=t'$ to $t=t$ gives
\begin{equation}
    \label{eq:diff:b}
    \begin{aligned}
    \rho_t(X_{t',t}(x)) &= \rho_{t'}(x) \exp\left( - \int_{t'}^t \nabla \cdot v_\tau(X_{t',\tau}(x)) d\tau\right)
    \end{aligned}
\end{equation}
Evaluating this expression at $x= X_{t,t'}(x)$ and using the group properties (i) $X_{t',t}(X_{t,t'}(x)) = x$ and (ii) $X_{t',\tau}(X_{t,t'}(x)) = X_{t,\tau}(x)$ gives~\eqref{eq:pdf:local:a}. Equation~\eqref{eq:expedct:a} can be derived by using~\eqref{eq:pdf:local:a} to express $\rho_t(x)$ in the integral at the left hand-side, changing integration variable $x\to X_{t',t}(x)$ and noting that the factor $ \exp\left( -\int_{t'}^t \nabla \cdot v_\tau(X_{t,\tau}(x))\right)$ is precisely the Jacobian of this change of variable. The result is the integral at the right hand-side of~\eqref{eq:expedct:a}.
\end{proof}

Lemma~\ref{th:pdf} also holds locally in time for any $v_t(x)$ that is $C^2$ in both $t$ and $x$. In particular, it holds locally if we set $s_t(x) = \nabla \log \rho_t(x)$ and if we assume that $\rho_0(x)$ is (i) positive everywhere on $\Omega$ and (ii) $C^3$ in $x$. In this case, \eqref{eq:transp:fpe:a} is the Fokker-Planck equation~\eqref{eq:fpe} and \eqref{eq:pdf:local:a} holds for the solution to that equation.

\subsection{Calculation of the differential entropy}

We now consider computation of the differential entropy, and state a similar result.
\begin{lemma}
\label{th:entropy}
Assume that  $\rho_0: \Omega \rightarrow \R_{\geq 0}$ is positive everywhere on $\Omega$  and $C^3$ in its argument. Let $\rho_t: \Omega \rightarrow \R_{\geq 0}$ denote the solution to the Fokker Planck equation~\eqref{eq:fpe} (or equivalently, to the transport equation~\eqref{eq:transp:fpe:a} with $s_t(x) = \nabla \log \rho_t(x)$ in the definition of $v_t(x)$). Then the differential entropy $H_t = -\int_\Omega \log \rho_t(x) \, \rho_t(x) dx$ can expressed as
\begin{equation}
    \label{eq:entropy:1:a}
    H_t = -\int_\Omega \log \rho_t(X_{0,t}(x))\, \rho_0(x) dx = H_0 + \int_0^t \int_\Omega \nabla \cdot v_\tau(X_{0,\tau}(x)) \rho_0(x) dx d\tau
\end{equation}
or
\begin{equation}
    \label{eq:entropy:3:a}
    H_t = H_0 - \int_0^t \int_\Omega s_\tau(X_{0,\tau}(x)) \cdot v_\tau(X_{0,\tau}(x)) \rho_0(x) dx d\tau
\end{equation}
\end{lemma}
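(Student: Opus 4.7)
The plan is to combine the two conclusions of Lemma~\ref{th:pdf} in the special case where $s_t(x) = \nabla \log \rho_t(x)$, so that $v_t$ is the probability-flow velocity associated with the Fokker-Planck equation~\eqref{eq:fpe}. The hypotheses listed in Section~\ref{sec:assumptions} (in particular $K$-smoothness of $\log\rho_t^*$ and smoothness of $b_t$, $D_t$) ensure that Lemma~\ref{th:pdf} applies.

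The first equality in~\eqref{eq:entropy:1:a} is immediate: apply~\eqref{eq:expedct:aa} with test function $\phi(x) = -\log \rho_t(x)$, yielding
\begin{equation*}
H_t \;=\; -\int_\Omega \log \rho_t(x)\, \rho_t(x)\, dx \;=\; -\int_\Omega \log \rho_t(X_{0,t}(x))\, \rho_0(x)\, dx.
\end{equation*}
For the second equality, specialize~\eqref{eq:pdf:local:a} with $t'=0$ and then evaluate at $X_{0,t}(x)$, using the group identities $X_{t,0}(X_{0,t}(x))=x$ and $X_{t,\tau}(X_{0,t}(x)) = X_{0,\tau}(x)$, which gives
\begin{equation*}
\rho_t(X_{0,t}(x)) \;=\; \rho_0(x)\, \exp\!\left(-\int_0^t \nabla\cdot v_\tau(X_{0,\tau}(x))\, d\tau\right).
\end{equation*}
Take $-\log$, integrate against $\rho_0(x)$, and use Fubini to swap the time and space integrals. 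This produces $H_0$ plus the claimed double integral.

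To obtain~\eqref{eq:entropy:3:a} from~\eqref{eq:entropy:1:a}, I would rewrite the divergence term by changing variables back to $\rho_t$. Specifically, applying the pushforward identity~\eqref{eq:expedct:aa} in reverse to $\phi(y) = \nabla\cdot v_\tau(y)$ gives
\begin{equation*}
\int_\Omega \nabla\cdot v_\tau(X_{0,\tau}(x))\, \rho_0(x)\, dx \;=\; \int_\Omega \nabla\cdot v_\tau(y)\, \rho_\tau(y)\, dy.
\end{equation*}
Now integrate by parts on the right-hand side, using the identity $\nabla\rho_\tau = (\nabla\log\rho_\tau)\rho_\tau = s_\tau \rho_\tau$:
\begin{equation*}
\int_\Omega \nabla\cdot v_\tau(y)\, \rho_\tau(y)\, dy \;=\; -\int_\Omega v_\tau(y)\cdot s_\tau(y)\, \rho_\tau(y)\, dy.
\end{equation*}
Changing variables once more via~\eqref{eq:expedct:aa} reintroduces $\rho_0$, and substituting back into~\eqref{eq:entropy:1:a} yields~\eqref{eq:entropy:3:a}.

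The only genuine obstacle is justifying the integration by parts, i.e.\ discarding the boundary term $\oint v_\tau \rho_\tau \cdot n\, dS$. Under the standing assumption that the process stays in $\Omega$ for all time and that $\rho_0$ and $v_\tau$ are smooth, this is standard: either $\Omega = \R^d$ and $\rho_\tau$ together with its derivatives decays sufficiently fast at infinity (guaranteed by the $K$-smoothness of $\log\rho_t^*$ combined with integrability), or $\Omega$ has a boundary with the no-flux condition $v_\tau\rho_\tau\cdot n = 0$ implicit in the well-posedness of~\eqref{eq:fpe}. Everything else in the proof is a direct application of Lemma~\ref{th:pdf} and Fubini.
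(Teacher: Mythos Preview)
Your proof is correct and matches the paper's argument for~\eqref{eq:entropy:1:a} exactly: the first equality via~\eqref{eq:expedct:aa} with $\phi=\log\rho_t$, the second via the pushforward formula~\eqref{eq:pdf:local:aa} evaluated along the flow. For~\eqref{eq:entropy:3:a} you take a slightly different route: you derive it \emph{from}~\eqref{eq:entropy:1:a} by pushing forward to $\rho_\tau$, integrating $\nabla\cdot v_\tau$ by parts against $\rho_\tau$, and pushing back; the paper instead differentiates $H_t$ directly in time using the transport PDE, integrates $\log\rho_t\,\nabla\cdot(v_t\rho_t)$ by parts, and then applies~\eqref{eq:expedct:aa} once. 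Both routes hinge on the same integration-by-parts identity and the same boundary assumption you flag, so the difference is cosmetic---your version has the mild advantage of exhibiting~\eqref{eq:entropy:3:a} as an immediate reformulation of~\eqref{eq:entropy:1:a}, while the paper's version avoids the double change of variables.
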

\begin{proof}
We first derive~\eqref{eq:entropy:1:a}. Observe that applying~\eqref{eq:expedct:aa} with $\phi = \log \rho_t$ leads to the first equality. The second can then be deduced from ~\eqref{eq:pdf:local:aa}. To derive~\eqref{eq:entropy:3:a}, notice that from~\eqref{eq:transp:fpe:a},
\begin{equation}
    \label{eq:ent:a}
    \begin{aligned}
    \frac{d}{dt} H_t & =  \int_\Omega \log \rho_t(x) \nabla \cdot \left(v_t(x) \rho_t(x)\right) dx,\\
    & = -\int_\Omega \nabla \log \rho_t(x)  \cdot v_t(x) \rho_t(x) dx,\\
    & = -\int_\Omega s_t(x)  \cdot v_t(x) \rho_t(x) dx
    \end{aligned}
\end{equation}
Above, we used integration by parts to obtain the second equality and $s_t = \nabla \log \rho_t$ to get the third. Now, using~\eqref{eq:expedct:aa} with $\phi = s_t  \cdot v_t$ integrating the result gives~\eqref{eq:entropy:3:a}.
\end{proof}

\subsection{Resampling of $\rho_t$ at any time $t$} 

If the score $s_t \approx \nabla \log \rho_t$ is known to sufficient accuracy, $\rho_t$ can be resampled at any time $t$ using the dynamics
\begin{equation}
    \label{eq:sde:artif}
    dX_\tau = s_t(X_\tau) d\tau + d W_\tau.
\end{equation}
In~\eqref{eq:sde:artif}, $\tau$ is an artificial time used for sampling that is distinct from the physical time in~\eqref{eq:sde}. For $s_t = \nabla \log \rho_t$, the equilibrium distribution of~\eqref{eq:sde:artif} is exactly $\rho_t$. In practice, $s_t$ will be imperfect and will have an error that increases away from the samples used to learn it; as a result,~\eqref{eq:sde:artif} should be used near samples for a fixed amount of time to avoid the introduction of additional errors.

\section{Further details on Score-Based Transport Modeling}
\label{app:proofs}
\subsection{Bounding the KL divergence}
\label{app:ent:prod}
Let us restate Proposition~\ref{prop:entropy} for convenience:
\sbtment*
\begin{proof}
 By assumption, $\rho_t$ solves~\eqref{eq:transport} and $\rho_t^*$ solves~\eqref{eq:fpe}. Denote by $v_t(x) = b_t(x) - D_t(x) s_t(x)$ and $v^*_t(x) = b_t(x) - D_t(x) s^*_t(x)$ with $s^*_t(x)=\nabla \log \rho_t^*(x)$. Then, we have
 \begin{equation*}
     \label{eq:ent:s1}
     \begin{aligned}
     \frac{d}{dt} \kl{\rho_t}{\rho^*_t} &= \frac{d}{dt} \int_\Omega \log\left(\frac{\rho_t(x)}{\rho_t^*(x)}\right) \rho_t(x) dx,\\
     & = -\int_\Omega \frac{\rho_t(x)}{\rho_t^*(x)} \partial_t \rho^*_t(x) dx + \int_\Omega \log \left(\frac{\rho_t(x)}{\rho_t^*(x)}\right) \partial_t \rho_t(x)dx,\\
     & = -\int_\Omega v_t^*(x) \cdot \nabla \left(\frac{\rho_t(x)}{\rho_t^*(x)}\right)  \rho^*_t(x) dx + \int_\Omega v_t(x) \cdot \nabla \log \left(\frac{\rho_t(x)}{\rho_t^*(x)}\right) \rho_t(x) dx,\\
     & = -\int_\Omega \left(v_t^*(x) - v_t(x)\right) \cdot \left(\nabla \log \rho_t(x)- \nabla \log \rho^*_t(x)\right) \rho_t(x) dx,\\
     & = \int_\Omega \left(s_t^*(x) - s_t(x)\right) \cdot D_t(x) \left(\nabla \log \rho_t(x)- s^*_t(x)\right) \rho_t(x) dx.
     \end{aligned}
 \end{equation*}
Above, we used integration by parts to obtain the third equality. Now, dropping function arguments for simplicity of notation, we have that
\begin{equation*}
    \label{eq:ent:s3}
    \begin{aligned}
    |\nabla\log\rho_t - s_t|^2_{D_t} & = |\nabla\log\rho_t -s^*_t + s^*_t - s_t|^2_{D_t},\\
    & = |\nabla\log\rho_t - s^*_t|^2_{D_t} + |s^*_t - s_t|^2_{D_t} + 2 (\nabla\log\rho_t -s^*_t)\cdot D_t(s^*_t - s_t),\\
    & \ge  2 (\nabla\log\rho_t - s^*_t)\cdot D_t (s^*_t - s_t).
    \end{aligned}
\end{equation*}
Hence, we deduce that
\begin{equation}
     \label{eq:ent:s4}
     \begin{aligned}
     &\frac{d}{dt} \kl{\rho_t}{\rho^*_t} \le \frac12  \int_\Omega |s_t(x) - \nabla\log\rho_t(x)|^2_{D_t(x)} \rho_0(x) dx.
     \end{aligned}
\end{equation}
\end{proof}

\subsection{SBTM in the Eulerian frame}
\label{app:sbtm:euler}
The Eulerian equivalent of Proposition~\ref{prop:loss2} can be stated as:

\begin{proposition}[SBTM in the Eulerian frame]
\label{prop:loss1} 
Assume that the conditions listed in Sec.~\ref{sec:assumptions} hold. Fix $T\in(0,\infty]$ and consider the optimization problem
\begin{equation}
\label{eq:sbtm2}
    \tag{SBTM2}
    \begin{aligned}
    &\min_{\{s_t:t\in [0,T)\}} \int_0^T \int_\Omega \left|s_t(x) -  \nabla \log \rho_t(x)\right|_{D_t(x)}^2 \rho_t(x) dx dt\\[6pt]
    & \text{subject to:} \quad \partial_t \rho_t(x) = - \nabla \cdot \left( v_t(x) \rho_t (x) \right), \:\:\: x \in \Omega
    \end{aligned}
\end{equation}
with $v_t(x) = b_t(x) - D_t(x) s_t(x)$. Then every minimizer of \eqref{eq:sbtm2} satisfies $D_t(x)s_t^*(x) = D_t(x)\nabla \log \rho_t^*(x)$ where $\rho^*_t: \Omega \to\R_{> 0}$ solves~\eqref{eq:fpe}.
\end{proposition}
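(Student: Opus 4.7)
The plan is to show that the value $0$ is attained by the objective in \eqref{eq:sbtm2}, and that any minimizer must therefore vanish pointwise on the support of $\rho_t$, which forces the transport equation to reduce to the Fokker--Planck equation.

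First I would exhibit a feasible candidate achieving objective value zero. Setting $s_t(x) = \nabla \log \rho^*_t(x)$, the velocity field in the constraint becomes $b_t(x) - D_t(x)\nabla \log \rho^*_t(x)$, so the constraint PDE becomes
\begin{equation*}
\partial_t \rho_t(x) = -\nabla\cdot\bigl(b_t(x)\rho_t(x) - D_t(x)\nabla\rho_t(x)\bigr),
\end{equation*}
which is exactly \eqref{eq:fpe}. Since $\rho_{t=0} = \rho_0 = \rho^*_{t=0}$ and the assumptions in Sec.~\ref{sec:assumptions} guarantee uniqueness of classical solutions, we get $\rho_t = \rho^*_t$. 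Then $s_t - \nabla\log\rho_t \equiv 0$, so the objective value is zero. As the integrand is non-negative (it is a $D_t$-semi-norm squared weighted by the density $\rho_t \ge 0$), the infimum is $0$ and this candidate is a minimizer.

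Next, for any minimizer $s^*_t$ with associated $\rho^*_t$ satisfying the constraint, the objective is zero, so the integrand must vanish almost everywhere in $(t,x)$ on $[0,T]\times\Omega$. Because $\rho_0$ is strictly positive and $C^3$, and because the velocity $v_t(x) = b_t(x) - D_t(x) s^*_t(x)$ inherits enough regularity from our hypotheses, Lemma~\ref{th:pdf} (via \eqref{eq:pdf:local:aa}) yields $\rho_t(x) > 0$ for all $(t,x)\in[0,T]\times\Omega$. Consequently,
\begin{equation*}
|s^*_t(x) - \nabla\log\rho_t(x)|^2_{D_t(x)} = 0 \quad \text{a.e.}
\end{equation*}
which is equivalent to $D_t(x)\bigl(s^*_t(x) - \nabla\log\rho_t(x)\bigr) = 0$ a.e., since $D_t(x)$ is symmetric positive semi-definite and $|\xi|^2_{D_t} = 0 \iff D_t\xi = 0$ (the kernels of $D_t$ and $D_t^{1/2}$ coincide). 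Equivalently, $D_t(x) s^*_t(x) = D_t(x) \nabla\log\rho_t(x)$.

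Substituting this equality into the constraint PDE, the velocity field becomes $b_t(x) - D_t(x)\nabla\log\rho_t(x)$, so $\rho_t$ satisfies \eqref{eq:fpe} with initial datum $\rho_0$; by uniqueness $\rho_t = \rho^*_t$. Therefore $D_t(x) s^*_t(x) = D_t(x) \nabla \log \rho^*_t(x)$ as claimed. The main subtlety I foresee is the a.e./regularity bookkeeping that lets one pass from ``integrand vanishes a.e.'' to an equality that can be substituted back into the PDE; this is handled by the continuity of $s^*_t$, $D_t$, and $\nabla\log\rho_t$ on the support of the (everywhere positive) $\rho_t$, together with invoking uniqueness for \eqref{eq:fpe} under the assumptions of Sec.~\ref{sec:assumptions}. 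Note that uniqueness is only up to $\ker D_t$, as reflected in the statement that only $D_t s^*_t$, not $s^*_t$ itself, is identified.
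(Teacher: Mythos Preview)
Your argument is correct and is actually the most direct route to the stated claim: exhibit a feasible point with objective value zero, then use nonnegativity of the integrand together with strict positivity of $\rho_t$ to force $D_t(s^*_t-\nabla\log\rho_t)=0$, and finally close the loop via uniqueness for~\eqref{eq:fpe}. The paper takes a different route: it introduces a Lagrange multiplier $\mu_t$ for the PDE constraint, writes the Euler--Lagrange system for the extended objective, checks that $(s^*_t,\rho^*_t,\mu_t)=(\nabla\log\rho^*_t,\rho^*_t,0)$ solves it, and then observes that this critical point zeroes the objective and is therefore a global minimizer; it further argues that zeroing the objective is the only way to be a global minimizer. The extra payoff of the paper's variational approach is that, by analyzing the Euler--Lagrange equations (in particular, showing the multiplier equation is homogeneous with terminal condition $\mu_T=0$, hence $\mu_t\equiv0$), it also rules out \emph{spurious local minimizers} of the constrained problem---a statement that goes beyond what the proposition asserts and that your direct argument does not address. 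Conversely, your approach is more elementary and makes the two logical steps (first $D_ts^*_t=D_t\nabla\log\rho_t$, then $\rho_t=\rho^*_t$ by uniqueness) explicit, whereas the paper compresses these into the single phrase ``this is the only way to zero the objective.''
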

In words, this proposition states that solving the constrained optimization problem~\eqref{eq:sbtm2} is equivalent to solving the Fokker-Planck equation~\eqref{eq:fpe}.
\begin{proof}
The constrained minimization problem~\eqref{eq:sbtm2} can be handled by considering the extended objective
\begin{equation}
\label{eq:prob:2:aa}
    \begin{aligned}
    \int_0^T \int_\Omega \left(\left|s_t(x) -  \nabla \log \rho_t(x)\right|_{D_t(x)}^2 \rho_t(x) + \mu_t(x) \left(\partial_t \rho_t(x) + \nabla \cdot \left(v_t(x) \rho_t(x)\right)\right) \right) dx dt
    \end{aligned}
\end{equation}
where $v_t(x) = b_t(x) - D_t(x) s_t(x)$ and $\mu_t:\R^d\rightarrow\R_{\geq 0}$ is a Lagrange multiplier. The Euler-Lagrange equations associated with~\eqref{eq:prob:2:aa} read
\begin{equation}
\label{eq:prob:2:aaa}
    \begin{aligned}
     \partial_t \rho_t(x) &= - \nabla \cdot \left(v_t(x)\rho_t(x)\right)\\
     \partial_t \mu_t(x)  &= v_t(x)^\T \nabla \mu_t(x) + |s_t(x)|_{D_t(x)}^2 - |\nabla \log \rho_t|_{D_t(x)}^2 \\
     &\qquad\qquad + 2 \nabla\cdot\left[D_t(x)\left(s_t(x) - \nabla \log \rho_t(x)\right)\right] , \\
     0 &= \mu_T(x),\\
     0 &= D_t(x)\left(s_t(x) - \nabla\log\rho_t(x)\right)\rho_t(x) + \tfrac12 D_t(x) \nabla\mu_t(x)\rho_t(x)
    \end{aligned}
\end{equation}
Clearly, these equations will be satisfied if $s^*_t(x) =  \nabla \log\rho^*_t(x)$ for all $x \in \Omega$, $\mu^*_t(x) = 0$ for all $x$, and $\rho_t^*$ solves~\eqref{eq:fpe}. This solution is also a global minimizer, because it zeroes the value of the objective. Moreover, all global minimizers must satisfy $D_t(x)s^*_t(x) = D_t(x)\nabla \log\rho^*_t(x)$ ($\rho_t-$almost everywhere), as this is the \textit{only} way to zero the objective.

It is also easy to see that there are no other local minimizers. To check this, we can use the fourth equation to write
\begin{equation*}
    D_t(x)(s_t(x) - \nabla\log\rho_t(x)) = \tfrac{1}{2}D_t(x)\nabla\mu_t(x).
\end{equation*}
Then,
\begin{align*}
    |s_t(x)|_{D_t(x)}^2 - |\nabla\log\rho_t(x)|_{D_t(x)}^2 = \tfrac{1}{2}\left(s_t(x) + \nabla\log\rho_t(x)\right)^\T D_t(x) \nabla\mu_t(x).
\end{align*}
This reduces the first three equations to
\begin{equation}
\label{eq:prob:2:aaaa}
    \begin{aligned}
     \partial_t \rho_t(x) &= - \nabla \cdot \left(b_t(x)\rho_t(x) - D_t(x) \nabla \rho_t(x) - \tfrac{1}{2}\rho_t D_t(x) \nabla \mu_t(x)\right) \\
     \partial_t \mu_t &= \left(b_t(x) - D_t(x)\nabla\log\rho_t(x) - \tfrac{1}{2}D_t(x)\nabla\mu_t(x)\right)^\T\nabla\mu_t(x)\\
     &\qquad + \nabla\cdot\left(D_t(x)\nabla\mu_t(x)\right) + \tfrac{1}{2}\left(s_t(x) + \nabla\log\rho_t(x)\right)^\T D_t(x)\nabla\mu_t(x).\\
     \mu_T(x) &= 0.
    \end{aligned}
\end{equation}
Since the equation for $\mu_t$ is homogeneous in $\mu_t$ and $\mu_T=0$, we must have $\mu_t=0$ for all $t\in[0,T)$, and the equation for $\rho_t$ reduces to~\eqref{eq:fpe}.
\end{proof}

\subsection{SBTM in the Lagrangian frame}
\label{app:sbtm:lag}

As stated, Proposition~\ref{prop:loss1} is not practical, because it is phrased in terms of the density $\rho_t$. The following result demonstrates that the transport map identity~\eqref{eq:rhot:rho0} can be used to re-express Proposition~\ref{prop:loss1} entirely in terms of known quantities.

\sbtmtwo*
\begin{proof}
Let us first show that $G_t(x)=\nabla \log \rho_t(X_t(x))$ satisfies~\eqref{eq:XtGt} if
$\rho_t= X_t\sharp \rho_0,$ i.e. if $\rho_t$ satisfies the transport equation~\eqref{eq:transport}. Since~\eqref{eq:transport} implies that
\begin{equation}
    \label{eq:g:1a}
    \partial_t \log \rho_t(x) + v_t(x) \cdot \nabla \log \rho_t(x) = - \nabla \cdot v_t(x),
\end{equation}
taking the gradient gives
\begin{equation}
    \label{eq:g:2a}
    \partial_t \nabla \log \rho_t(x) + [\nabla v_t(x)]^\T \nabla \log \rho_t(x) + \nabla \nabla \log \rho_t(x) \cdot v_t(x) = - \nabla \nabla \cdot v_t(x) .
\end{equation}
Therefore $G_t(x) = \nabla \log \rho_t(X_t(x))$ solves
\begin{equation}
    \label{eq:g:3a}
    \begin{aligned}
    \frac{d}{dt} G_t(x) & = \partial_t \nabla \log \rho_t(X_t(x)) + \nabla \nabla \log \rho_t(X_t(x))\cdot \frac{d}{dt} X_t(x), \\
    & = \partial_t \nabla \log \rho_t(X_t(x)) + \nabla \nabla \log \rho_t(X_t(x))\cdot v_t(x),\\
    & = - \nabla \nabla \cdot v_t(X_{t}(x)) - [\nabla  v_t(X_{t}(x))]^\T \nabla \log \rho_t(X_t(x)) ,
    \end{aligned}
\end{equation}
which recovers the equation for $G_t(x)$ in~\eqref{eq:XtGt}. Hence, the objective in~\eqref{eq:sbtm} can also be written as
\begin{equation}
    \label{eq:loss:prob:flow:a2}
    \begin{aligned}
    &\int_0^T \int_\Omega \left|s_t(X_t(x)) - \nabla \log \rho_t(X_t(x))\right|^2 \rho_0(x) dx dt\\
    &\qquad\qquad = \int_0^T \int_\Omega \left|s_t(x) - \nabla \log \rho_t(x)\right|^2 \rho_t(x) dx dt
    \end{aligned}
\end{equation}
where the second equality follows from \eqref{eq:expedct:aa} if $\rho_t(x)$ satisfies~\eqref{eq:transp:fpe:a}. Hence, \eqref{eq:sbtm} is equivalent to \eqref{eq:sbtm2}. The bound on $\kl{X_T\sharp\rho_0}{\rho_T^*}$ follows from~\eqref{eq:ent:prod:n}.\hfill
\end{proof}
\paragraph{Adjoint equations.} In terms of a practical implementation, the objective in~\eqref{eq:sbtm2} can be evaluated by generating samples $\{x_i\}_{i=1}^n$ from $\rho_0$ and solving the equations for $X_t$ and $G_t$ using the initial conditions $X_0(x_i) = x_i$ and $G_0(x_i) = \nabla \log \rho_0(x_i)$. Note that evaluating this second initial condition only requires one to know $\rho_0$ up to a normalization factor. To evaluate the gradient of the objective, we can introduce equations adjoint to those for $X_t$ and $G_t$. They read, respectively
\begin{equation}
    \label{eq:adjoint:a}
    \begin{aligned}
    \frac{d}{dt}\theta_{t}(x) + [\nabla v_t(X_{t}(x))]^\T \theta_t(x) &= \eta_t(x) \cdot \nabla  \nabla v_t(X_{t}(x)) G_t(x) \\
    &\qquad + \eta_t(x) \cdot \nabla \nabla \nabla v_t(X_{t}(x))G_t(x)\\
    &\qquad\qquad + 2\nabla s_t(X_t(x))(s_t(X_t(x))-G_t(x)), \\
    \theta_{T}(x) &= 0\\
    \frac{d}{dt}\eta_{t}(x) - \nabla  v_t(X_{t}(x)) \eta_t(x) &= 2(G_t(x)-s_t(X_t(x))),\\
    \eta_T(x) &= 0.
\end{aligned}
\end{equation}
In terms of these functions, the gradient of the objective is the gradient with respect to $s_t(x)$ (or the parameters in this  function when it is modeled by a neural network) of the extended objective:
\begin{equation}
    \label{eq:extended:obj:lag:a}
    \begin{aligned}
    L[s_t] &= \int_0^T \int_\Omega \left|s_t(X_t(x)) - G_t(x)\right|^2 \rho_0(x) dx dt \\
    &\qquad + \int_0^T \int_\Omega \theta_t(x) \cdot\left (\dot X_{t}(x) - v_t(X_{t}(x)) \right) \rho_0(x) dx dt\\
    &\qquad + \int_0^T \int_\Omega \eta_t(x) \cdot\Big(\dot G_t(x) + [\nabla  v_t(X_{t}(x))]^\T G_t(x) \\
    &\qquad\qquad\qquad\qquad + \nabla \nabla \cdot v_t(X_{t}(x))\Big) \rho_0(x) dx dt,
    \end{aligned}
\end{equation}
where $v_t(x) = b_t(x) - D_t(x) s_t(x)$.

\subsection{Sequential SBTM}
\label{app:sbtm:seq} 

Let us restate Proposition~\ref{prop:loss3} for convenience:
\sbtmthree*
\begin{proof}
If $X_t \sharp \rho_0=\rho_t$, then by definition we have the identity
\begin{align}
    \label{eq:idenity:loss}
    &\int_\Omega \left(|s_t(X_t(x))|_{D_t(X_t(x))}^2 +2 \nabla \cdot \left(D_t(X_t(x))s_t(X_t(x))\right) \right) \rho_0(x) dx \nonumber\\
    &\qquad\qquad\qquad = \int_\Omega \left(|s_t(x)|_{D_t(x)}^2 +2 \nabla \cdot \left(D_t(x) s_t(x) \right)\right) \rho_t(x) dx.
\end{align}
This means that the optimization problem in \eqref{eq:sbtm3} is equivalent to 
\begin{equation*}
     \min_{s_t} \int_\Omega \left(|s_t(x)|_{D_t(x)}^2 +2 \nabla \cdot \left(D_t(x) s_t(x) \right)\right) \rho_t(x) dx.
\end{equation*}
All minimizers $s_t^*$ of this optimization problem satisfy $D_t(x)s^*_t(x) = D_t(x)\nabla\log \rho_t(x)$. Hence, by~\eqref{eq:transport},
\begin{equation}
    \partial_t \rho_t(x) = -\nabla\cdot \left(b_t(x)\rho_t(x) - D_t(x)\nabla\rho_t(x)\right)
\end{equation}
which recovers~\eqref{eq:fpe}, so that $\rho_t(x) = \rho_t^*(x)$ solves~\eqref{eq:fpe}.
\end{proof}

\subsection{Learning from the SDE}
\label{app:sde_learn}
In this section, we show that learning from the SDE alone -- i.e., avoiding the use of self-consistent samples -- does not provide a guarantee on the accuracy of $\rho_t$. We have already seen in~\eqref{eq:ent:prod:n} that it is sufficient to control $\int_0^{T}\int_{\Omega} |s_t(x) - \nabla\log\rho_t(x)|_{D_t}^2\rho_t^*(x)dxdt$ to control $\kl{\rho_T}{\rho_T^*}$. The proof of Proposition~\ref{prop:entropy} shows that control on 
\begin{equation}
    \label{eq:SDE_score_matching}
    \int_0^T \int_{\Omega}|s_t(x) - \nabla\log\rho_t^*(x)|_{D_t(x)}^2\rho_t^*(x)dxdt,
\end{equation}
as would be provided by training on samples from the SDE, does not ensure control on $\kl{\rho_T}{\rho_T^*}$. The following proposition shows that control on~\eqref{eq:SDE_score_matching} does not guarantee control on $\kl{\rho_T^*}{\rho_T}$ either. An analogous result appeared in~\cite{lu2021maximum} in the context of SBDM for generative modeling; here, we provide a self-contained treatment to motivate the use of the sequential SBTM procedure discussed in the main text.
\begin{proposition}
    \label{prop:SDE_learn}
    Let $\rho_t:\Omega\rightarrow\R_{> 0}$ solve~\eqref{eq:transport}, and let $\rho_t^*:\Omega\rightarrow\R_{> 0}$ solve~\eqref{eq:fpe}. Then, the following equality holds
    \begin{equation}
    \label{eq:nobound}
    \begin{aligned}
        &\kl{\rho_T^*}{\rho_T} = \int_0^T\int_{\Omega}|s_t(x) - \nabla\log\rho_t^*(x)|_{D_t(x)}^2\rho_t^*(x)dxdt\\
        &\qquad + \int_0^T \int_{\Omega} \left(\nabla\log\rho_t(x) - s_t(x)\right)^\T D_t(x)\left(s_t(x) - \nabla\log\rho_t^*(x)\right)\rho_t^*(x)dxdt.
    \end{aligned}
    \end{equation}
\end{proposition}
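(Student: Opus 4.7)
My plan is to establish the identity directly by computing $\frac{d}{dt}\kl{\rho_t^*}{\rho_t}$, following a template very close to the proof of Proposition~\ref{prop:entropy}, and then integrating in time using the initial matching $\rho_0 = \rho_0^*$, which gives $\kl{\rho_0^*}{\rho_0} = 0$. First I differentiate $\kl{\rho_t^*}{\rho_t} = \int_\Omega \rho_t^* \log(\rho_t^*/\rho_t) dx$ under the integral sign, use $\int_\Omega \partial_t \rho_t^* dx = 0$ to eliminate the ``self'' term arising from the time derivative inside the logarithm, and reduce the remaining two pieces via a pair of boundary-free integrations by parts. The substitutions $\partial_t \rho_t^* = -\nabla\cdot(v_t^*\rho_t^*)$ with $v_t^* = b_t - D_t\nabla\log\rho_t^*$ and $\partial_t \rho_t = -\nabla\cdot(v_t \rho_t)$ with $v_t = b_t - D_t s_t$ are what connect the derivative of the KL to a flux-type expression. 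The integrations by parts are justified by the smoothness and decay conditions of Sec.~\ref{sec:assumptions}, exactly as in the proof of Proposition~\ref{prop:entropy}.

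The key algebraic identity I exploit is $\nabla(\rho_t^*/\rho_t) = (\rho_t^*/\rho_t)(\nabla\log\rho_t^* - \nabla\log\rho_t)$, which lets the $1/\rho_t$ factor cancel cleanly in the second piece and produces
\begin{equation*}
    \frac{d}{dt}\kl{\rho_t^*}{\rho_t} = \int_\Omega \bigl(v_t^*(x) - v_t(x)\bigr) \cdot \bigl(\nabla\log\rho_t^*(x) - \nabla\log\rho_t(x)\bigr) \rho_t^*(x) dx.
\end{equation*}
At this point I substitute $v_t^* - v_t = D_t(s_t - \nabla\log\rho_t^*)$, which is a direct consequence of the definitions of $v_t$ and $v_t^*$, and then split $\nabla\log\rho_t^* - \nabla\log\rho_t = (\nabla\log\rho_t^* - s_t) + (s_t - \nabla\log\rho_t)$. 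The first half of this splitting contributes a term proportional to $|s_t - \nabla\log\rho_t^*|_{D_t}^2\rho_t^*$, which recovers the SDE score-matching term appearing first in~\eqref{eq:nobound}; the second half yields a cross term $(s_t - \nabla\log\rho_t^*)^\T D_t (s_t - \nabla\log\rho_t)\rho_t^*$ that matches the second term in~\eqref{eq:nobound} after using the symmetry of $D_t$ to swap the order of the factors.

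Integration in time from $0$ to $T$, together with $\kl{\rho_0^*}{\rho_0} = 0$, then yields~\eqref{eq:nobound}. The main obstacle is careful bookkeeping of signs, especially ensuring consistent orientation of the Mahalanobis-weighted inner products when decomposing the combined integrand and when applying the symmetry of $D_t$; no deep analytic tools beyond the smoothness and decay assumptions of Sec.~\ref{sec:assumptions} are required. Conceptually, the point of keeping the identity rather than completing the square (as done in Proposition~\ref{prop:entropy}) is precisely that the cross term is sign-indefinite, so control of the first term alone---which is the loss one would minimize when training from SDE trajectories---does not suffice to bound $\kl{\rho_T^*}{\rho_T}$.
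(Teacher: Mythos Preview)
Your approach is essentially identical to the paper's: both differentiate $\kl{\rho_t^*}{\rho_t}$, substitute the two transport equations, integrate by parts to obtain the flux form $\int_\Omega (v_t^*-v_t)\cdot(\nabla\log\rho_t^*-\nabla\log\rho_t)\,\rho_t^*\,dx$, insert $v_t^*-v_t=D_t(s_t-\nabla\log\rho_t^*)$, add and subtract $s_t$ in the remaining factor, and integrate in time. The paper's proof is stated more tersely (it simply invokes ``an analogous argument'' to Proposition~\ref{prop:entropy} for the first step), but the logic and decomposition are the same as yours.
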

Proposition~\ref{prop:SDE_learn} shows that minimizing the error between $s_t$ and $\nabla\log\rho_t^*$ on samples of $\rho_t^*$ leaves a remainder term, because in general $\nabla\log\rho_t \neq s_t$. The proof shows that we may obtain the simple upper bound
    \begin{equation}
    \begin{aligned}
        &\kl{\rho_T^*}{\rho_T} \leq \frac{1}{2}\int_0^T\int_{\Omega}|s_t(x) - \nabla\log\rho_t^*(x)|_{D_t(x)}^2\rho_t^*(x)dxdt\\
        &\qquad + \frac{1}{2}\int_0^T \int_{\Omega} |s_t(x) - \nabla\log\rho_t(x)|_{D_t(x)}^2\rho_t^*(x)dxdt.
    \end{aligned}
    \end{equation}
However, controlling the above quantity requires enforcing agreement between $s_t$ and $\nabla\log\rho_t$ in addition to $s_t$ and $\nabla\log\rho_t^*$; this is precisely the idea of SBTM.
\begin{proof}
By an analogous argument as in the proof of Proposition~\ref{prop:entropy}, we find
\begin{align*}
    \frac{d}{dt}\kl{\rho_t^*}{\rho_t} &= \int \left(\nabla\log\rho_t(x) - \nabla\log\rho_t^*(x)\right)^\T D_t(x)\left(s_t(x) - \nabla\log\rho_t^*(x)\right)\rho_t^*(x)dx
\end{align*}
Adding and subtracting $s_t(x)$ to the first term in the inner product and expanding gives
\begin{equation}
\begin{aligned}
     &\frac{d}{dt}\kl{\rho_t^*}{\rho_t} = \int_{\Omega}|s_t(x) - \nabla\log\rho_t^*(x)|^2\rho_t^*(x)dx\\
     &\qquad + \int_{\Omega} \left(\nabla\log\rho_t(x) - s_t(x)\right)^\T D_t(x)\left(s_t(x) - \nabla\log\rho_t^*(x)\right)\rho_t^*(x)dx,
 \end{aligned}
\end{equation}
Integrating from $0$ to $T$ completes the proof.
\end{proof}

\subsection{Denoising Loss}
\label{app:denoise}
The following standard trick can be used to avoid computing the divergence of $s_t(x)$:
\begin{lemma}
\label{th:denoise:a} Given $\xi=N(0,I)$, we have
\begin{equation}
    \label{eq:loss2:denoise:bb}
    \begin{aligned}
    &\lim_{\alpha\downarrow0}\alpha^{-1} \E \big(s_{t}(x+\alpha \xi) \cdot \xi \big) =  \nabla\cdot s_t(x),\\
    &\lim_{\alpha\downarrow0}\alpha^{-1} \E \big(s_{t}(x+ \alpha \sigma_t(x) \xi) \cdot \sigma_{t} (x) \xi \big) =  \tr\left(D_t(x) \nabla s_t(x)\right)
    \end{aligned}
\end{equation}
\end{lemma}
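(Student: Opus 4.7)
The plan is to Taylor expand $s_t$ around the base point $x$ and compute the resulting expectations using the standard Gaussian moment identities $\mathbb{E}[\xi] = 0$ and $\mathbb{E}[\xi \xi^\T] = I$. Because $s_t$ is assumed smooth (its divergence appears in the statement), a first-order expansion suffices to recover the claimed leading-order behavior.

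For the first identity, write
\begin{equation*}
    s_t(x + \alpha \xi) = s_t(x) + \alpha \, \nabla s_t(x)\, \xi + R_\alpha(x,\xi),
\end{equation*}
where $\nabla s_t(x)$ is the Jacobian of $s_t$ at $x$ and $R_\alpha$ is a remainder of order $O(\alpha^2 |\xi|^2)$. Taking the inner product with $\xi$ and then expectation, the zeroth-order contribution $s_t(x)\cdot \mathbb{E}[\xi]$ vanishes, while the first-order contribution gives
\begin{equation*}
    \mathbb{E}\bigl[\xi^\T \nabla s_t(x)\, \xi\bigr] = \tr\bigl(\nabla s_t(x)\bigr) = \nabla\cdot s_t(x).
\end{equation*}
Dividing through by $\alpha$ and sending $\alpha\downarrow 0$ yields the first identity, once the $\alpha^{-1}\mathbb{E}[R_\alpha\cdot \xi]$ term is shown to vanish (see below).

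For the second identity, apply the same expansion but at the perturbed point $x + \alpha\,\sigma_t(x)\xi$:
\begin{equation*}
    s_t(x + \alpha\, \sigma_t(x)\xi) = s_t(x) + \alpha\, \nabla s_t(x)\,\sigma_t(x)\,\xi + O(\alpha^2|\xi|^2).
\end{equation*}
Pairing with $\sigma_t(x)\xi$ and taking expectation, the first-order contribution is
\begin{equation*}
    \mathbb{E}\bigl[\xi^\T \sigma_t(x)^\T \nabla s_t(x)\, \sigma_t(x)\, \xi\bigr] = \tr\bigl(\sigma_t(x)^\T \nabla s_t(x)\,\sigma_t(x)\bigr).
\end{equation*}
Using cyclicity of the trace together with the Cholesky identity $D_t(x) = \sigma_t(x)\sigma_t(x)^\T$ and the fact that $\tr(M) = \tr(M^\T)$, this rearranges to $\tr\bigl(D_t(x)\nabla s_t(x)\bigr)$, which is precisely the claimed limit.

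The only mild obstacle is verifying that the Taylor remainders contribute $o(1)$ after division by $\alpha$. This is where the regularity assumed in Section~\ref{sec:assumptions} enters: since $s_t$ is assumed smooth enough that $\nabla\cdot s_t$ and $\nabla s_t$ are well-defined and locally bounded, the remainder satisfies $|R_\alpha(x,\xi)| \le C(x)\alpha^2 |\xi|^2$ with $C(x)$ finite, so $\alpha^{-1}\mathbb{E}[R_\alpha\cdot \xi] = O(\alpha)\,\mathbb{E}[|\xi|^3] \to 0$ by dominated convergence. Hence the limits in both identities are justified, and the lemma follows.
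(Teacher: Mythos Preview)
Your proof is correct and follows essentially the same approach as the paper: Taylor expand $s_t$ to first order, take expectation, use $\E[\xi]=0$ and $\E[\xi\xi^\T]=I$, then the trace identity (with $\sigma_t\sigma_t^\T=D_t$ for the second claim). You are slightly more explicit about bounding the remainder than the paper, which simply writes $o(\alpha)$, but the argument is the same.
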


\begin{proof}
We have
\begin{equation}
    \label{eq:setp1:aa}
    \begin{aligned}
    \alpha^{-1} s_{t}(x+\alpha \xi) \cdot \xi &= \alpha^{-1} s_{t}(x) \cdot \xi + (\nabla s_t(x) \xi ) \cdot  \xi + o(\alpha)
    \end{aligned}
\end{equation}
The expectation of the first term on the right-hand side of this equation is zero; the expectation of the second gives the result in~\eqref{eq:loss2:denoise:bb}. Hence, taking the expectation of~\eqref{eq:setp1:aa} and evaluating the result in the limit as $\alpha\downarrow 0$ gives the first equation in~\eqref{eq:loss2:denoise:bb}. The second equation in~\eqref{eq:loss2:denoise:bb} can be proven similarly using $\sigma_t(x) \sigma_t(x)^\T = D_t(x)$.
\end{proof}
Replacing $\nabla\cdot s_t(x)$ in~\eqref{eq:sbtm3} with the first expression in~\eqref{eq:setp1:aa} for a fixed $\alpha > 0$ gives the loss
\begin{equation}
    \label{eqn:sliced}
    \mathcal{L}[s_t] = \E_\xi\left[\int_{\Omega}\left(|s_t(X_t(x))|^2 + \frac{2}{\alpha}s_t(X_t(x) + \alpha \xi)\cdot\xi\right)\rho_0(x)dx\right].
\end{equation}
Evaluating the square term at a perturbed data point recovers the denoising loss of~\citet{vincent_connection_2011}
\begin{equation}
    \label{eqn:denoising}
    \mathcal{L}[s_t] = \E_\xi\left[\int_{\Omega}\left|s_t(X_t(x) + \alpha\xi) + \frac{\xi}{\alpha}\right|^2\rho_0(x)dx\right].
\end{equation}
We can improve the accuracy of the approximation with a ``doubling trick'' that applies two draws of the noise of opposite sign to reduce the variance. This amounts to replacing the expectations in~\eqref{eq:loss2:denoise:bb} with
\begin{equation}
    \label{eq:loss2:denoise:2}
    \begin{aligned}
    &\tfrac12\alpha^{-1} \E \big[ s_{t}(x+\alpha \xi) \cdot \xi  - s_{t}(x- \alpha \xi) \cdot \xi \big],\\
    &\tfrac12\alpha^{-1} \E \big[s_{t}(x+ \alpha \sigma_t(x) \xi) \cdot \sigma_{t} (x) \xi - s_{t}(x-\alpha \sigma_t(x) \xi) \cdot \sigma_{t} (x) \xi\big],
    \end{aligned}
\end{equation}
whose limits as $\alpha\to0$ are $\nabla\cdot s_t(x)$ and $ \tr\left(D_t(x) \nabla s_t(x)\right)$, respectively.
In practice, we observe that this approach always helps stabilize training. Moreover, we observe that use of the denoising loss also stabilizes training, so that it is preferable to full computation of $\nabla \cdot s_t(x)$ even when the latter is feasible.

\section{Gaussian case}
\label{app:gauss}
Here, we consider the case of an Ornstein-Uhlenbeck (OU) process where the score can be written analytically, thereby providing a benchmark for our approach. The example treated in Section~\ref{sec:harmonic} with details in Appendix~\ref{app:exp:harmonic} is a special case of such an OU process with additional symmetry arising from permutations of the particles. The SDE reads
\begin{equation}
    \label{eq:OU:a}
    dX_t = - \Gamma_t (X_t - b_t) dt + \sqrt{2} \sigma_t dW_t
\end{equation}
where $X_t\in\R^d$, $\Gamma_t\in \R^{d\times d}$ is a time-dependent positive-definite tensor (not necessarily symmetric), $b_t\in \R^d$ is a time-dependent vector, and $\sigma_t\in \R^{d\times d}$ is a time-dependent tensor. The Fokker-Planck equation associated with~\eqref{eq:OU:a} is
\begin{equation}
    \label{eq:fpe:OU:a}
    \partial_t \rho_t^*(x) = - \nabla \cdot \left( (\Gamma_t x - b_t) \rho_t^*(x) - D_t \nabla \rho_t^*(x) \right) 
\end{equation}
where $D_t = \sigma_t \sigma_t^\T$. Assuming that the initial condition is Gaussian, $\rho_0 = \mathsf{N}(m_0,C_0)$ with $C_0= C_0^\T\in \R^{d\times d} $ positive-definite, the solution is Gaussian at all times $t\ge0$, $\rho_t^* = \mathsf{N}(m_t,C_t)$ with $m_t$ and $C_t= C_t^\T$ solutions to
\begin{equation}
    \label{eq:m:C:a}
    \begin{aligned}
    \dot m_t & = - \Gamma_t (m_t - b_t)\\
    \dot C_t & = - \Gamma_t C_t - C_t \Gamma_t^\T + 2 D_t
    \end{aligned}
\end{equation}
This implies in particular that
\begin{equation}
    \label{eq:logrhot:a}
    \nabla \log \rho_t^*(x)  = - C_t^{-1} (x- m_t).
\end{equation}
so that the probability flow equation for $X_t$ and the equation for $G_t$ written in~\eqref{eq:sbtm2} read
\begin{equation}
    \label{eq:Xt:Gt:a}
    \begin{aligned}
    \dot X_t (x)& =  (D_t C_t^{-1}-\Gamma_t) X_t(x) + \Gamma_t b_t - D_t C_t^{-1} m_t,\\
    \dot G_t(x) & = (\Gamma_t^\T - C_t^{-1} D_t) G_t(x),
    \end{aligned}
\end{equation}
with initial condition $X_0(x) = x$ and $G_0(x) = \nabla \log \rho_0(x) = -C_0^{-1} (x-m_0)$.
It is easy to see that with $x\sim \rho_0=\mathsf{N}(m_0,C_0)$ we have $X_t(x)\sim \rho_t^* = \mathsf{N}(m_t,C_t)$ since, from the first equation in~\eqref{eq:Xt:Gt:a},  the mean and variance of $X_t$ satisfy~\eqref{eq:m:C:a}. Similarly, when $x\sim \rho_0=\mathsf{N}(m_0,C_0)$,  $G_0(x) \sim N(0,C_0^{-1})$, so that $G_t(x) \sim \mathsf{N}(0,C_t^{-1})$ because the second equation in~\eqref{eq:Xt:Gt:a} is linear and hence preserves Gaussianity. Moreover, $\E_0 G_t(x)=0$ and $B_t = B_t^\T = \E_0 [G_t(x) G_t^\T(x)]$  satisfies
\begin{equation}
\label{eq:B:a}
    \frac{d}{dt} B_t = (\Gamma_t^\T - C_t^{-1} D_t) B_t + B_t (\Gamma_t - D_t C_t^{-1})
\end{equation}
The solution to this equation is $B_t=C_t^{-1}$ since substituting this ansatz into \eqref{eq:B:a} gives the equation for $C_t^{-1}$ that we can deduce from~\eqref{eq:m:C:a}
\begin{equation}
    \label{eq:Cm1:a}
    \frac{d}{dt} C^{-1}_t = C_t^{-1} \dot C_t C_t^{-1} = - C_t^{-1}\Gamma_t  - \Gamma_t^\T C_t^{-1} + 2 C_t^{-1} D_t C_t^{-1}.
\end{equation}

Note that if $\Gamma_t=\Gamma$, $b_t = b$, and $D_t=D$ are all time-independent, then $\lim_{t\to\infty} \rho_t = N(m_\infty,C_\infty)$ with $m_\infty = b$ and $C_\infty$ the solution to the Lyapunov matrix equation
\begin{equation}
    \label{eq:Cinfty:a}
    \Gamma C_\infty + C_\infty \Gamma^\T = 2 D.
\end{equation}
This means that at long times the coefficients at the right-hand sides of \eqref{eq:Xt:Gt:a} also settle on constant values. However, $X_t$ and $G_t$ do not necessarily stop evolving; one situation where they too converge is when the OU process is in detailed balance, i.e. when $\Gamma = D A$ for some $A=A^\T \in \R^{d\times d} $ positive-definite. In that case,  the solution to~\eqref{eq:Cinfty:a} is $C_\infty = A^{-1}$ and it is easy to see that at long times the right-hand sides of \eqref{eq:Xt:Gt:a} tend to zero. 

\begin{remark}
This last conclusion is actually more generic than for a simple OU process. For any SDE in detailed balance, i.e. that can be written as
\begin{equation}
    \label{eq:detailed}
    dX_t = - D(X_t) \nabla U(X_t) dt + \nabla \cdot D(X_t) dt + \sqrt{2} \sigma_t(X_t) dW_t
\end{equation}
where $U:\R^d\to \R_{>0}$ is a $C^2$-potential such that $Z= \int_{\R^d} e^{-U(x)} dx <\infty$, we have that $\lim_{t\to\infty} \rho_t (x) = Z^{-1} e^{-U(x)}$, and the corresponding flows $X_t$ and $G_t$ eventually stop as $t\to\infty$. In this case, $\rho_t$ follows gradient descent in $W_2$ over the energy
\begin{equation}
    \label{eq:energy}
    E[\rho] = \int_{\R^d} (U(x) + \log \rho(x) ) \rho(x) dx
\end{equation}
The unique PDF minimizing this energy is $Z^{-1} e^{-U(x)}$, and as $t\to\infty$ $X_t$ converges towards a transport map between the initial $\rho_0$ and  $Z^{-1} e^{-U(x)}$.
\end{remark}

\section{Experimental details and additional examples}
\label{app:exp}
All numerical experiments were performed in $\texttt{jax}$ using the $\texttt{dm-haiku}$ package to implement the networks and the $\texttt{optax}$ package for optimization. 

\subsection{Harmonically interacting particles in a harmonic trap}
\label{app:exp:harmonic}
\paragraph{Network architecture} Both the single-particle energy $U_{\theta_t, 1}: \R^d\rightarrow \R$ and two-particle interaction energy $U_{\theta_t, 2}: \R^d\times\R^d \rightarrow \R$ are parameterized as single hidden-layer neural networks with the \texttt{swish} activation function~\citep{swish_act} and $\mathtt{n\_hidden} = 100$ hidden neurons. The hidden layer biases are initialized to zero while the hidden layer weights are initialized from a truncated normal distribution with variance $1/\mathtt{fan\_in}$, following the guidelines recommended in~\citep{ioffe_batch_2015}.

\paragraph{Optimization} The Adam~\citep{kingma_adam_2017} optimizer is used with an initial learning rate of $\eta = 10^{-4}$ and otherwise default settings. At time $t=0$, the analytical relative loss
\begin{equation}
    \label{eqn:loss_analytical}
    L[s_0] = \frac{\int |s_0(x) - \nabla \log \rho_0(x)|^2\rho_0(x) dx}{\int |\nabla \log \rho_0(x)|^2\rho_0(x) dx}
\end{equation}
is minimized to a value less than $10^{-4}$ using knowledge of the initial condition $\rho_0 = \mathsf{N}\left(\beta_0, \sigma_0^2I\right)$ with $\sigma_0 = 0.25$. In \eqref{eqn:loss_analytical}, the expectation with respect to $\rho_0$ is approximated by an initial set of samples $x_{j} = \left(x^{(1)}_{j}, x^{(2)}_{j}, \hdots, x^{(N)}_{j}\right)^\T$ with $j = 1, \hdots, n$ drawn from $\rho_0$. In the experiments, we set $n = 100$, which we found to be sufficient to obtain a few digits of relative accuracy on various quantities of interest. We set the physical timestep $\Delta t = 10^{-3}$ and take $\mathtt{n\_opt\_steps} = 25$ steps of Adam until the norm of the gradient is below $\mathtt{gtol} = 0.1$.

\paragraph{Analytical moments} First define the mean, second moment, and covariance according to
\begin{align*}
    m_t^{(i)} &= \E\big[X_t^{(i)}\big],\\
    M_t^{(ij)} &= \E\big[X_t^{(i)} \big(X_t^{(j)}\big)^\T\big],\\
    C_t^{(ij)} &= M^{(ij)} - m^{(i)}\big(m^{(j)}\big)^\T.
\end{align*}
It is straightforward to show that the mean and covariance obey the dynamics
\begin{align}
    \label{eqn:harmonic_mean_gen}
    \dot{m}_t^{(i)} &= -(m_t^{(i)} - \beta_t) +\frac{\alpha}{N}\sum_{k=1}^{N}\left(m_t^{(i)} - m_t^{(k)}\right),\\
    \label{eqn:harmonic_cov_gen}
    \dot{C}_t^{(ij)} &= -2 (1 - \alpha)C_t^{(ij)} + 2DI\delta_{ij} -       \frac{\alpha}{N}\sum_{k=1}^{N} \left(C_t^{(kj)} + C_t^{(ik)}\right)
\end{align}
Because the particles are indistinguishable so long as they are initialized from a distribution that is symmetric with respect to permutations of their labeling, the moments will satisfy the ansatz
\begin{align}
    \label{eqn:symmetric_mean}
    m_t^{(i)} &= \bar m(t), \:\:\: i=1, \hdots, N\\
    \label{eqn:symmetric_cov}
    C_t^{(ij)} &= C_d(t) \delta_{ij} + C_o(t) (1 - \delta_{ij}), \:\:\: i, j=1, \hdots, N.
\end{align}
The dynamics for the vector $\bar m: \R_{\geq 0} \rightarrow \R^{\bar{d}}$, as well as the matrices $C_d: \R_{\geq 0} \to \R^{\bar{d} \times \bar{d}}$ and $C_o: \R_{\geq 0} \to \R^{\bar{d} \times \bar{d}}$ can then be obtained from \eqref{eqn:harmonic_mean_gen} and \eqref{eqn:harmonic_cov_gen} as
\begin{align*}
    \dot{\bar m} &= \beta_t - \bar m,\\
    \dot{C}_d &= 2(\alpha - 1) C_d - 2\frac{\alpha}{N}\left(C_d + (n-1)C_o\right) + 2DI,\\
    \dot{C}_o &= 2(\alpha - 1)C_o - 2\frac{\alpha}{N}\left(C_d + (n-1)C_o\right).
\end{align*}
For a given $\beta:\R\rightarrow\R^{\bar{d}}$, these equations can be solved analytically in Mathematica as a function of time, giving the mean ${m}_t = \bar m(t)\otimes 1_N \in \R^{N\bar{d}}$ and covariance $C_t = (C_d(t) - C_o(t)) \otimes I_{N\times N} + C_o(t) \otimes \left(1_N 1_N^\T\right)\in \R^{N\bar{d}\times N\bar{d}}$. Because the solution is Gaussian for all $t$, we then obtain the analytical solution to the Fokker-Planck equation $\rho^*_t = \mathsf{N}\left(m _t, C_t\right)$ and the corresponding analytical score $-\nabla \log \rho^*_t(x) = C_t^{-1}(x -  m_t)$.

\paragraph{Potential structure}
Here, we show that the potential for this example lies in the class of potentials described by~\eqref{eqn:parametric_potential}. From Equation~\ref{eqn:symmetric_cov}, we have a characterization of the structure of the covariance matrix $C_t$ for the analytical potential $U_t(x) = \frac{1}{2}(x - m_t)^\T C_t^{-1} (x - m_t)$. In particular, $C_t$ is block circulant, and hence is block diagonalized by the roots of unity (the block discrete Fourier transform). That is, we may take a ``block eigenvector'' of the form $\omega_k = \left(I_{\bar{d}\times \bar{d}} \rho^k, I_{\bar{d}\times \bar{d}} \rho^{2k}, \hdots, I_{\bar{d}\times \bar{d}} \rho^{(N-1)k}\right)^\T$ with $\rho = \exp(-2\pi i /N)$ for $k = 0, \hdots N-1$. By direct calculation, this block diagonalization leads to two distinct block eigenmatrices, \begin{equation*}
    C_t = V \begin{pmatrix} C_d(t) + (N-1)C_o(t) & 0 & 0 & \hdots & 0\\
    0 & C_d(t) - C_o(t) & 0 & \hdots & 0\\
    0 & 0 & \ddots & \hdots & 0 \\
    0 & 0 & 0 & \hdots & C_d(t) - C_o(t) \end{pmatrix} V^{-1}
\end{equation*}
where $V \in \R^{N\bar{d} \times N\bar{d}}$ denotes the matrix with block columns $\omega_k$. The inverse matrix $C_t^{-1}$ then must similarly have only two distinct block eigenmatrices given by $\left(C_d(t) + (N-1)C_o(t)\right)^{-1}$ and $\left(C_d(t) - C_o(t)\right)^{-1}$. By inversion of the block Fourier transform, we then find that 
\begin{equation*}
    \left(C_t^{-1}\right)^{(ij)} = \bar{C}_d\delta_{ij} + \bar{C}_o(1 - \delta_{ij})   
\end{equation*}
for some matrices $\bar{C}_d, \bar{C}_o$. Hence, by direct calculation
\begin{align*}
    \left(x - m_t\right)^\T C_t^{-1} \left(x - m_t\right) &= \sum_{i, j}^N \left(x^{(i)} - m_t^{(i)}\right)^\T\left(C_t^{-1}\right)^{(ij)}\left(x^{(j)} - m_t^{(j)}\right)\\
    &= \sum_{i, j}^N \left(x^{(i)} - \bar m(t)\right)^\T\left(\bar{C}_d\delta_{ij} + \bar{C}_o (1 - \delta_{ij})\right)\left(x^{(j)} - \bar m(t)\right)\\
    &= \sum_{i}^N \left(x^{(i)} - \bar m(t)\right)^\T\bar{C}_d\left(x^{(i)} - \bar m(t)\right)^\T \\
    &\qquad + \sum_{i\neq j}^N\left(x^{(i)} - \bar m(t)\right)^\T\bar{C}_o\left(x^{(j)} - \bar m(t)\right)
\end{align*}
Above, we may identify the first term in the last line as $\sum_{i=1}^N U_1(x^{(i)})$ and the second term in the last line as $\frac{1}{N}\sum_{i \neq j}^N U_2(x^{(i)}, x^{(j)})$. Moreover, $U_2(\cdot, \cdot)$ is symmetric with respect to its arguments.

\paragraph{Analytical Entropy} For this example, the entropy can be computed analytically and compared directly to the learned numerical estimate. By definition,
\begin{align*}
    s_t &= -\int_{\mathbb{R}^{N\bar d}} \log \rho_t(x) \rho_t(x)dx,\\
    &= - \int_{\mathbb{R}^{N\bar d}}  \left(-\frac{N\bar d}{2}\log(2\pi) - \frac{1}{2}\log\det C_t - \frac{1}{2} (x-m_t)^\T C_t^{-1}(x-m_t)\right)\rho_t(x)dx,\\
    &= \frac{N\bar d}{2}\left(\log\left(2\pi\right) + 1\right) + \frac{1}{2}\log\det C_t.
\end{align*}

\paragraph{Additional figures}
Images of the learned velocity field and potential in comparison to the corresponding analytical solutions can be found in Figures~\ref{fig:harmonic_velocity} and~\ref{fig:harmonic_potential}, respectively. Further detail can be found in the corresponding captions. We stress that the two-dimensional images represent single-particle slices of the high-dimensional functions.
\begin{figure}
    \centering
    \includegraphics[width=\textwidth]{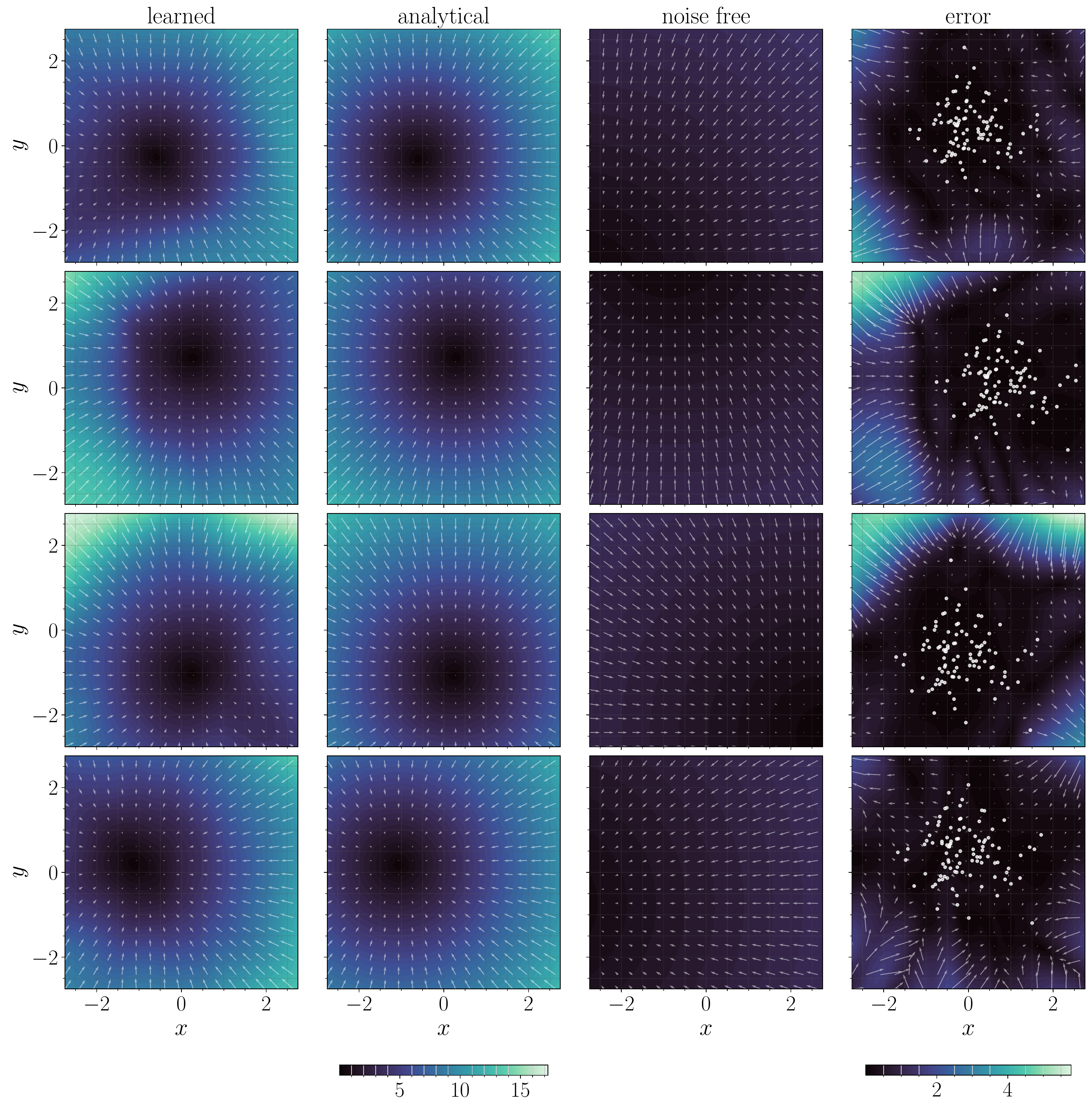}
    \caption{\textit{A system of $N=50$ harmonically interacting particles in a harmonic trap: slices of the high-dimensional velocity field.} Cross sections of the velocity field for $N=50$ harmonically interacting particles in a moving harmonic trap. Columns depict the learned, analytical, noise-free, and error between the learned and analytical velocity fields, respectively. Rows indicate different time points, corresponding to $t = 1.25, 2.5, 3.75, $ and $5.0$, respectively. Each velocity field is plotted as a function of a single particle's coordinate (denoted as $x$ and $y$); all other particle coordinates are fixed to be at the location of a sample. Color depicts the magnitude of the velocity field while arrows indicate the direction. Learned, analytical, and noise-free share a colorbar for direct comparison; the error occurs on a different scale and is plotted with its own colorbar. White circles in the error plot indicate samples projected onto the $xy$ plane; locations of low error correlate well with the presence of samples.}
    \label{fig:harmonic_velocity}
\end{figure}

\begin{figure}
    \centering
    \includegraphics[width=0.975\textwidth]{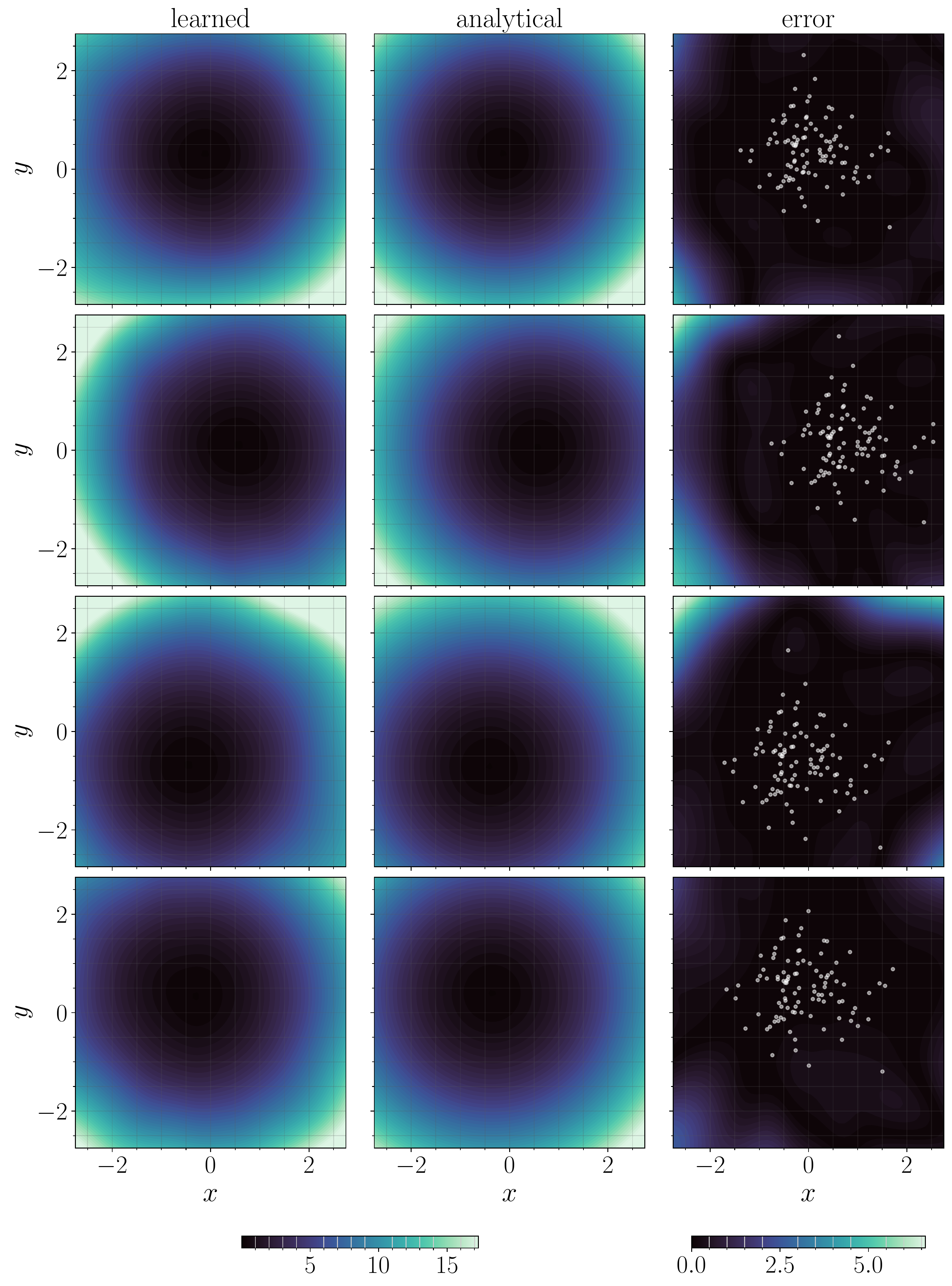}
    \caption{\textit{A system of $N=50$ harmonically interacting particles in a harmonic trap: slices of the high-dimensional potential.} Cross sections of the potential field $U_{\theta_t}(x)$ computed via~\eqref{eqn:parametric_potential}.
    Columns depict the learned, analytical, and error between the learned and analytical, respectively. 
    Rows indicate distinct time points, corresponding to $t = 1.25, 2.5, 3.75,$ and $5.0$, respectively. 
    As in Figure~\ref{fig:harmonic_velocity}, each potential field is plotted as a function of a single particle's coordinate (denoted as $x$ and $y$) with other particle coordinates fixed on a sample.
    All potentials are normalized via an overall shift so that the minimum value is zero.
    White circles in the error plot indicate samples from the learned system projected onto the $xy$ plane.}
    \label{fig:harmonic_potential}
\end{figure}

\subsection{Soft spheres in an anharmonic trap}
\label{app:exp:anharmonic}
\paragraph{Network architecture} Both potential terms $U_{\theta_t, 1}$ and $U_{\theta_t, 2}$ are modeled as four hidden-layer deep fully connected networks with $\mathtt{n\_hidden} = 32$ neurons in each layer. The initialization is identical to Appendix~\ref{app:exp:anharmonic}.

\paragraph{Optimization and initialization} The Adam optimizer is used with an initial learning rate of $\eta = 5\times 10^{-3}$ and otherwise default settings. At time $t=0$, the loss \eqref{eqn:loss_analytical} is minimized to a value less than $10^{-6}$ over $n$ samples $X_0^{(i)} \sim \otimes_{j=1}^N \mathsf{N}(\beta_0, \sigma_0^2I)$, $i=1, \hdots, n$ with $\sigma_0 = 0.5$ and $n=10^4$. Past this initial stage, the denoising loss is used with a noise scale $\sigma = 0.1$; we found that a higher noise scale regularized the problem and led to a smoother prediction for the entropy, at the expense of a slight bias in the moments. By increasing the number of samples $n$, the noise scale can be reduced while maintaining an accurate prediction for the entropy. The loss is minimized by taking $\mathtt{n\_opt\_steps}= 25$ steps of Adam at each timestep. The physical timestep is set to $\Delta t = 10^{-3}$.

\paragraph{Additional figures} Figures~\ref{fig:anharmonic_circular_moments} and~\ref{fig:anharmonic_linear_moments} show the full grid of covariance components for the SDE, learned, and noise free systems. The noise free underestimates the moments, while the learned and SDE agree well.

\begin{figure}
    \centering
    \includegraphics[width=\textwidth]{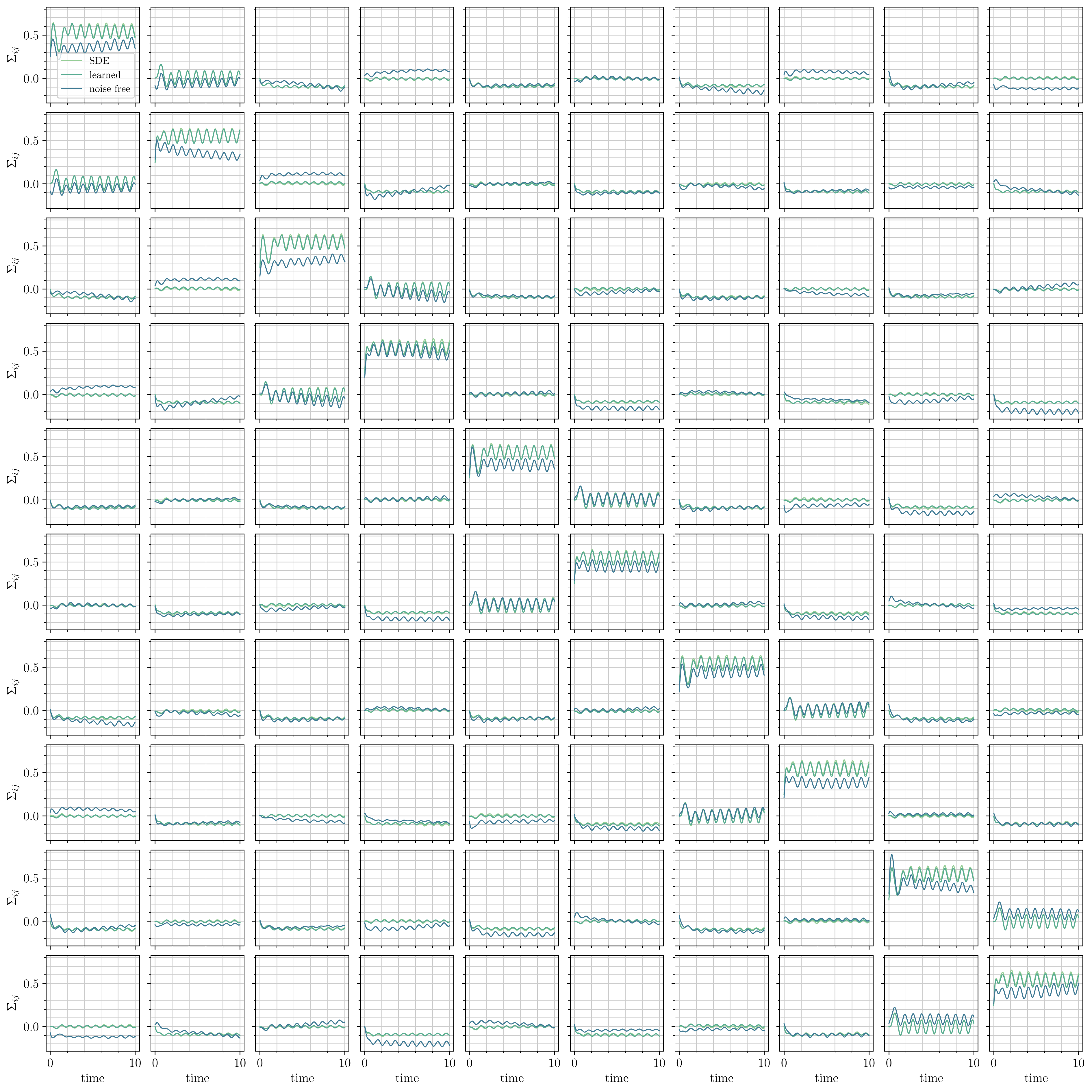}
    \caption{\textit{A system of $N=5$ soft-sphere particles in an anharmonic trap: moments.} All components of the covariance matrix over time for the circular trap motion. The learned system and the stochastic system agree well, while the noise free system underestimates the moments.
    }
    \label{fig:anharmonic_circular_moments}
\end{figure}
\begin{figure}
    \centering
    \includegraphics[width=\textwidth]{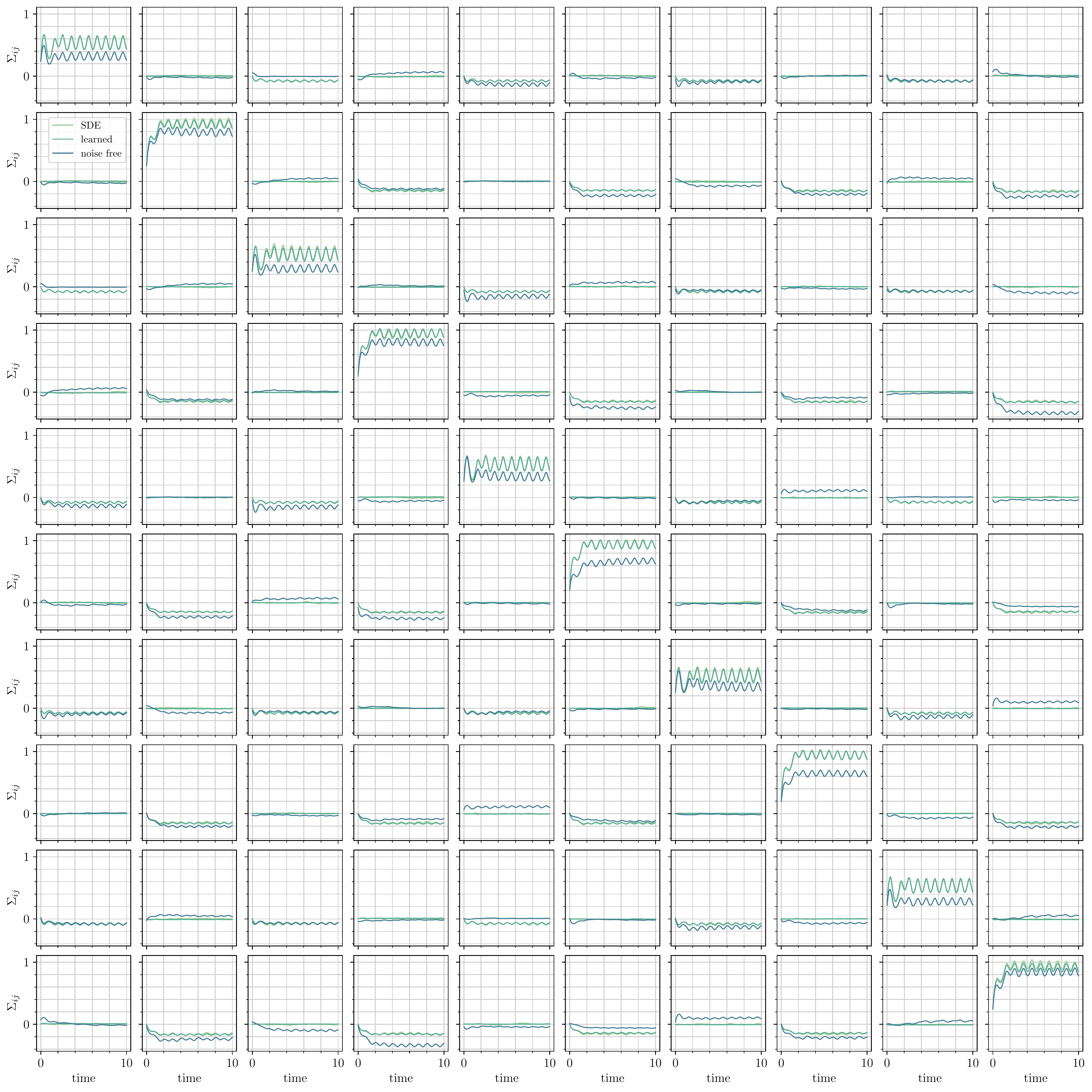}
    \caption{\textit{A system of $N=5$ soft-sphere particles in an anharmonic trap: moments.} All components of the covariance matrix over time for the linear trap motion. The learned system and the stochastic system agree well, while the noise free system underestimates the moments.
    }
    \label{fig:anharmonic_linear_moments}
\end{figure}

\subsection{An active swimmer}
\label{app:swimmer}
\paragraph{Setup} We parameterize the score directly $s_t: \R^{2}\rightarrow\R$ using a three hidden layer neural network with $\mathtt{n\_hidden} = 32$ neurons per hidden layer. Because the dynamics is anti-symmetric, we impose that $s(x, v) = -s(-x, -v)$.

\paragraph{Optimization and initialization} The network initialization is identical to the previous two experiments. The physical timestep is set to $\Delta t = 10^{-3}$. The Adam optimizer is used with an initial learning rate of $\eta = 10^{-4}$. At time $t=0$ the loss \eqref{eqn:loss_analytical} is minimized to a tolerance of $10^{-4}$ over $n=10^4$ samples drawn from an initial distribution $\mathsf{N}(0, \sigma_0^2I)$ with $\sigma_0 = 1$. The denoising loss is used with a noise scale $\sigma = 0.05$, using $\mathtt{n\_opt\_steps} = 25$ steps of Adam until the norm of the gradient is below $\mathtt{gtol} = 0.5$.

\end{document}